\documentclass[conference]{IEEEtran}
\IEEEoverridecommandlockouts

\usepackage[table,xcdraw]{xcolor}
\usepackage{cite}
\usepackage{amsmath,amssymb,amsfonts}
\usepackage{algorithmic}
\usepackage{graphicx}
\usepackage{textcomp}
\usepackage{xcolor}
\usepackage{multirow}

\usepackage{bm}
\usepackage{tikz}
\usetikzlibrary{positioning}
\usetikzlibrary{decorations,arrows}
\usetikzlibrary{decorations.markings}
\usetikzlibrary{patterns,calc}
\usetikzlibrary {arrows.meta}
\usetikzlibrary {bending}
\usepackage{algorithm}
\usepackage{graphicx}
\usepackage{makecell}
\usepackage{amsthm}

\newcommand{\indep}{\mathop{\perp\!\!\!\!\perp}}
\newcommand{\notindep}{\mathop{\not \perp\!\!\!\!\perp}}

\makeatletter
\newenvironment{breakablealgorithm}
  {
   \begin{center}
     \refstepcounter{algorithm}
     \hrule height.8pt depth0pt \kern2pt
     \renewcommand{\caption}[2][\relax]{

       {\raggedright\textbf{\ALG@name~\thealgorithm} ##2\par}%
       \ifx\relax##1\relax 
         \addcontentsline{loa}{algorithm}{\protect\numberline{\thealgorithm}##2}%
       \else 
         \addcontentsline{loa}{algorithm}{\protect\numberline{\thealgorithm}##1}%
       \fi
       \kern2pt\hrule\kern2pt
     }
  }{
     \kern2pt\hrule\relax
   \end{center}
  }

\makeatother

\def\BibTeX{{\rm B\kern-.05em{\sc i\kern-.025em b}\kern-.08em
    T\kern-.1667em\lower.7ex\hbox{E}\kern-.125emX}}

\newtheorem{definition}{Definition}
\newtheorem{theorem}{Theorem}

\newtheorem{proposition}{Proposition}
\begin{document}

\title{Learning causal graphs using variable grouping according to ancestral relationship}

\author{\IEEEauthorblockN{Ming Cai}
\IEEEauthorblockA{\textit{Graduate School of Informatics} \\
\textit{Kyoto University}\\
Kyoto, Japan \\
cai.ming.52d@st.kyoto-u.ac.jp}
\and
\IEEEauthorblockN{Hisayuki Hara}
\IEEEauthorblockA{\textit{Institute for Liberal Arts and Sciences} \\
\textit{Kyoto University}\\
Kyoto, Japan \\
hara.hisayuki.8k@kyoto-u.ac.jp}
}

\maketitle

\begin{abstract}

    Causal discovery has drawn increasing attention in recent years. 
    Over the past quarter century, several useful algorithms for learning causal graphs have been proposed. 
    However, when the sample size is small relative to the number of variables, the accuracy of estimating causal graphs using existing methods decreases. 
    In addition, some methods are not feasible when the sample size is smaller than the number of variables.
    
    To circumvent these problems, some researchers proposed causal structure learning algorithms using divide-and-conquer approaches (e.g., \cite{Cai2013, Zhang2020}). For learning the entire causal graph, the divide-and-conquer approaches first split variables into several subsets according to the conditional independence relationships among the variables, then apply a conventional causal structure learning algorithm to each subset and merge the estimated results. 
    Since the divide-and-conquer approach reduces the number of variables to which a causal structure learning algorithm is applied, it is expected to improve the estimation accuracy of causal graphs, especially when the sample size is small relative to the number of variables and the model is sparse.
    However, existing methods are either computationally expensive or do not provide sufficient accuracy when the sample size is small.
    
    This paper proposes a new algorithm for grouping variables 
    according to the ancestral relationships among the variables, assuming that the causal model is LiNGAM \cite{Shimizu2006}, where the causal relationships are linear, and the exogenous variables are mutually independent and distributed as continuous non-Gaussian distributions. 
    We call the proposed algorithm CAG (\textbf{C}ausal \textbf{A}ncestral-relationship-based \textbf{G}rouping).  
    The time complexity of 
    the ancestor finding in
    CAG is shown to be cubic to the number of variables. 
    Extensive computer experiments confirm that the proposed method outperforms the original DirectLiNGAM \cite{Shimizu2011} 
    without grouping variables and other divide-and-conquer approaches
    not only in estimation accuracy but also in computation time 
    when the sample size is small relative to the number of variables and the model is sparse. 
\end{abstract}

\begin{IEEEkeywords}
causal discovery, causal DAG, conditional independence test, DirectLiNGAM, divide-and-conquer, linear regression.
\end{IEEEkeywords}
\section{Introduction}
\label{sec:1}
In recent years, inference using high-dimensional causal models for observational data has played a pivotal role in various fields, such as econometrics \cite{Hoover2006}, biology \cite{Sachs2005}, and psychology \cite{Glymour1998}. 
For the high dimensional causal inference, the structural causal model (SCM, \cite{Pearl, Pearl1995}) defined by a directed acyclic graph (DAG) has been extensively used. 
Since the causal relationships among variables are usually unknown, we also need to learn the causal DAG before we use the SCM.

Over the past quarter century, several practical algorithms for learning causal DAGs without latent confounders have been proposed. These algorithms are classified into several types. 

The constraint-based methods use the conditional independence (CI) relationships among variables to determine causal directions. The PC algorithm \cite{Spirtes} is a typical example. 
Using hypothesis tests, the PC algorithm first estimates CI relationships between all pairs of variables. Then, it estimates the skeleton of a causal DAG and the directions of edges in the skeleton in turn.
However, the time complexity of the PC algorithm, in the worst case, is exponential to the number of variables. 
Hence, the PC algorithm is not feasible for high-dimensional data. 

The greedy equivalent search (GES) \cite{Chickering2002} is an algorithm that learns causal structures using a model selection criterion such as BIC \cite{schwarz1978}. This type of method is called the score-based method. As Chickering et al. \cite{Chickering1996, Chickering2004} have proven, however, GES belongs to NP-hard, 
and its application to high-dimensional causal models is also impractical.

It is important to note that the constraint-based and score-based methods can only identify causal DAGs up to the Markov equivalence class, the set of all DAGs compatible with the inferred CI relationships.
To fully identify causal DAGs requires additional constraints on the causal model.

Shimizu et al. \cite{Shimizu2006} considered the linear structural equation model as the causal model, where exogenous variables are mutually independent, and the distributions of the exogenous variables are continuous and non-Gaussian. 
They called the model linear non-Gaussian acyclic model (LiNGAM) and showed that it is possible to fully identify the causal DAG that defines LiNGAM using independent component analysis (ICA-LiNGAM) \cite{Hyvarinen2002}. 
However, ICA-LiNGAM tends to converge to a locally optimal solution when the dimension of the model is high, resulting in lower estimation accuracy (e.g.,\cite{Shimizu2011}). To overcome this difficulty, Shimizu \cite{Shimizu2011} proposed DirectLiNGAM that estimates causal DAGs using linear regression between variables and CI tests.

Hoyer et al. \cite{Hoyer2008} and Zhang and Hyv\"arinen \cite{Zhang2012} generalized LiNGAM to nonlinear and showed that when the model is nonlinear, the causal DAG is identifiable even if the exogenous variables are Gaussian distributed.

However, the estimation accuracy of existing methods tends to decrease as the sample size decreases relative to the number of variables. 
Furthermore, DirectLiNGAM is not feasible when the sample size is smaller than or equal to the number of variables (e.g., \cite{Wang2020}).

Cai et al. \cite{Cai2013} proposed the scalable causation 
discovery algorithm (SADA) to address this problem. 
The main idea of SADA is to group variables into several subsets according to the CI relationships between the variables, then learn the causal DAG of each group, merge each result, and return the entire causal DAG estimate.  
In this method, the causal DAG learning algorithm is applied to each smaller group of variables, reducing the sample size required for it to work.
Through simulation studies, Cai et al. \cite{Cai2013} showed that SADA improves LiNGAM concerning estimation accuracy. 
In SADA, however, the time complexity of grouping variables is of exponential order for the number of variables.
In addition, SADA has a severe drawback in that even if the correct CI relationships are known, the marginal model for the variables in each group is not necessarily the causal model defined by the sub-DAG induced by the variables in each group. 
This means the estimator of a causal DAG estimated by SADA may not be consistent depending on the true causal DAG (e.g., \cite{Zhang2020}). 

Zhang et al. \cite{Zhang2020} proposed another algorithm for grouping variables called causality partitioning (CAPA). 
CAPA requires only low-order CI relationships among variables to group variables. 
Letting $p$ be the number of variables and $\sigma_{\max}$ be the maximum order of the conditioning set, the time complexity of CAPA is $O(p^{\sigma_{\max}+2})$. 
CAPA guarantees that if the correct low-order CI relationships are known, the marginal model for each group of variables is the causal model defined by the sub-DAG induced by the variables in each group.
However, especially in the case of DAGs with high outdegree and low indegree, CAPA tends not to group variables finely enough to improve estimation accuracy even if the true conditional independence relationships are known.

Maeda and Shimizu \cite{Maeda2020} proposed the repetitive causal discovery (RCD), which is intended to be applied to the model with latent confounders.
RCD can also be applied to the models without latent confounders.
RCD first determines the ancestral relationships among variables using linear regression and CI tests and creates a list of ancestor sets for each variable. 
The parent-child relationships between variables are determined from the CI relationships among variables in each estimated ancestor set. RCD can be interpreted as one of the divide-and-conquer approaches because RCD splits the entire variable into families of ancestor sets for each variable and then learns causal DAGs for each group. 

In this paper, we propose another scheme for grouping variables according to the ancestral relationships between the variables. 
We also assume LiNGAM for the causal model. 
The proposed method uses the algorithm to find variables' ancestor sets in RCD. 
When the true causal DAG is connected, and the ancestor sets are correctly estimated, the variable grouping obtained by the proposed method is the family of maximal elements in the family of the union of each variable and its ancestors. 
The time complexity of the proposed algorithm for grouping variables is the third order of the number of variables. 
Suppose the true ancestor sets are correctly estimated. In that case, the marginal model for the variables in each group obtained by the proposed method is always the LiNGAM defined by the sub-DAG induced by the variables in each group, and no edge exists across different groups. 
Therefore, the entire causal DAG can be consistently estimated by applying the causal structure learning algorithm such as DirectLiNGAM to each group and merging the results. 
We call the proposed procedure for grouping variables the causal ancestral-relationship-based grouping (CAG). 

Extensive computer experiments show that CAG outperforms the original DirectLiNGAM 
without grouping variables, CAPA, and RCD regarding estimation accuracy and computation time when the sample size is small relative to the number of variables and the true causal DAG is sparse. 

The rest of this paper is organized as follows: 
Section \ref{sec:2} summarizes some existing causal structure learning algorithms and clarifies the position of the proposed method. 
Section \ref{sec:3} describes the details of the proposed method. 
Section \ref{sec:4} confirms the proposed methods's usefulness through computer experiments.
Section \ref{sec:5} 
concludes the paper. 
The pseudo-code for the proposed method is provided in the Appendix.

\section{Related Works}
\label{sec:2}
\subsection{LiNGAM and Variants}
\label{sec:LiNGAM}
Let $X=(x_1,\ldots,x_p)^\top$ be a $p$-dimensional random vector.
In the following, we identify $X$ with the variable set. 
Assume that the causal relationship among the variables is acyclic and linear. 
LiNGAM \cite{Shimizu2006} is expressed by 
\begin{align}
    \label{model:LiNGAM}
        X  =  BX + e,
\end{align}
where the error term $e=(e_1,\ldots,e_p)^\top$ is assumed to be independently distributed as continuous non-Gaussian distributions.
$B=\{b_{ij}\}$ is a $p\times p$ coefficient matrix. 
$b_{ij}$ represents the direct causal effect from $x_j$ to $x_i$.
$b_{ij}=0$ indicates the absence of the direct causal effect from $x_j$ to $x_i$.
We note that it is possible to transform the matrix $B$ into a strictly lower triangular matrix by permuting the rows and columns when a DAG defines the causal relationship between $X$. 

The model (\ref{model:LiNGAM}) is rewritten by 
\begin{align}
    \label{model:LiNGAM2}
    X = (I-B)^{-1}e,
\end{align}
where $I$ denotes the $p \times p$ identity matrix. 
Noting that (2) is equivalent to the independent component analysis (ICA) model, 
Shimizu et al. \cite{Shimizu2006} showed that $B$ is identifiable and proposed an algorithm for estimating $B$.
The algorithm is known as ICA-LiNGAM. 

However, as the number of variables increases, ICA-LiNGAM is more likely to converge to a locally optimal solution, which reduces the accuracy of causal DAG estimation. 
To overcome this problem, Shimizu et al. \cite{Shimizu2011} proposed another algorithm for estimating $B$ named DirectLiNGAM.  
When linear regression analyses are conducted in two ways for each pair of variables, with one of them as the dependent variable and the other as the independent variable, if the independent variable and the residuals are mutually independent in only one of the models, the independent variable in that model will precede the dependent variable in causal order. 
After determining the causal order of all variables, DirectLiNGAM estimates the set of parents of each variable by sparse estimating a linear regression model with each variable as the dependent variable and all variables preceding it in the causal order as independent variables. This approach ensures no edges violate the estimated causal order. 
DirectLiNGAM is also less accurate when the number of variables is large relative to the sample size.
DirectLiNGAM tends to output redundant edges even when the sample size is large.

The computational cost of DirectLiNGAM is higher than that of ICA-LiNGAM. 
The time complexity of DirectLiNGAM is 
$O(np^3M^2 + p^4M^3)$, where $n$ is the sample size, 
$p$ is the number of variables, 
and $M$ is the maximal rank found by the low-rank decomposition used in the kernel-based independence measure \cite{Shimizu2011}. 
Since DirectLiNGAM requires estimating a linear regression model, 
DirectLiNGAM is not feasible when $n$ is smaller than $p$. 
\subsection{Scalable Causation Discovery Algorithm (SADA)}
\label{sec:SADA}
Cai et al. \cite{Cai2013} proposed a divide-and-conquer approach called the scalable causation discovery algorithm (SADA) to enhance the scalability of causal DAG learning algorithms. 
SADA first splits the variable set $X$ into two or more subsets. 
Then, a causal DAG learning algorithm such as DirectLiNGAM is applied to each group of variables, and finally, 
all the results are merged to estimate the entire causal DAG.
Because SADA applies the causal DAG learning algorithm to each group with a smaller number of variables, if the variables are grouped correctly, SADA is expected to improve the estimation accuracy when the sample size is small relative to the number of the entire variable set.
SADA is feasible even if the sample size is smaller than the number of variables as long as it is larger than the number of variables in each group. 

The procedure for grouping $X$ into two subsets by SADA is as follows.
SADA first randomly selects two variables $x_i, x_j \in X$ satisfying $x_i \indep x_j \mid (X \setminus \{x_i,x_j\})$ 
and finds the smallest $\hat{X} \subseteq X\backslash \lbrace x_i, x_j \rbrace$ with respect to the inclusion relation 
satisfying $x_i \indep x_j \mid \hat{X}$. 
The disjoint subsets $X_1$, $X_2$, and $C$ are computed according to the following procedure.

\begin{enumerate}
    \item Initialize with $X_1=\{x_i\}$, $X_2=\{x_j\}$, $C = \{\hat{X}\}$.
    \item For all $w \in X \setminus (X_1 \cup X_2 \cup C)$
        \begin{itemize}
            \item[(a)] if $w \indep X_2 \mid \hat{C}$ for $\exists\hat{C} \subseteq C$, \\
                then $X_1 \leftarrow X_1 \cup \{w\}$
            \item[(b)] if $w \indep X_1 \mid \hat{C}$ for $\exists\hat{C} \subseteq C$,\\
                then $X_2 \leftarrow X_2 \cup \{w\}$
            \item[(c)] else $C \leftarrow C \cup \{w\}$
        \end{itemize}
    \item For all $s \in C$
    \begin{itemize}
            \item[(a)] if $s\indep X_2 \mid \hat{C}$ for $\exists\hat{C} \subseteq C \setminus \{ s\}$, \\
                then $X_1 \leftarrow X_1 \cup \{s\}$ and $C \leftarrow C \setminus \{s\}$
            \item[(b)] if $s\indep X_1 \mid \hat{C}$ for $\exists\hat{C} \subseteq C \setminus \{ s\}$,\\
                then $X_2 \leftarrow X_2 \cup \{s\}$ and $C \leftarrow C \setminus \{s\}$
        \end{itemize}
    \item Return $X_1$, $X_2$ and $C$
\end{enumerate}
In step 2, the intermediate set $C$ tends to be large, and step 3 downsizes $C$ as much as possible.

The output of SADA is two subsets $V_1 = X_1 \cup C$ and $V_2 = X_2 \cup C$. 
This algorithm can be repeated recursively to group variables into smaller subsets. 
In the implementation, the lower bound of the number of variables in each group is set to $\theta$, and SADA is applied to $V_1$ or $V_2$ to create even smaller groups only when $|V_1|>\theta$ or $|V_2|>\theta$. If the number of variables is less than or equal to $\theta$, or if we cannot find $x_i, x_j \in V_k$ satisfying $x_i \indep x_j \mid (V_k \setminus \{x_i,x_j\})$, $k=1,2$, 
no further grouping is made.

We illustrate the SADA procedure to split $X$ into two subsets where the true causal DAG is the one in Fig \ref{Example_Fig}.
Table \ref{SADA1} shows the process of grouping variables $x_1,\ldots,x_9$ into two subsets. 
Since both $x_3 \indep x_9 \mid \{x_1, x_2, x_4\ldots x_8\}$ and $x_3 \indep x_9$ hold in the model defined by the DAG in Fig. \ref{Example_Fig}, 
we can initialize with $X_1=\{x_3\}$, $X_2=\{x_9\}$ and $C=\emptyset$. 
We then use the CI tests to determine one by one to which of $X_1$, $X_2$, and $C$ the remaining variables belong.
This procedure returns $X_1=\{x_1,x_2,x_3, x_5,x_6\}$, $X_2=\{x_7,x_8,x_9\}$, $C=\{x_4\}$ and hence
$V_1=\lbrace x_1,x_2,x_3,x_4,x_5,x_6 \rbrace$ and $V_2 = \lbrace x_4, x_7,x_8,x_9\rbrace$. 
In this case, we note that the marginal models for $V_1$ and $V_2$ are defined by the sub-DAGs induced by $V_1$ and $V_2$, respectively. 
If the initial values of $X_1$, $X_2$, and $C$ or the scanning order of $w \in X \setminus (X_1 \cup X_2 \cup C)$ changes, $V_1$ and $V_2$ may also change. 

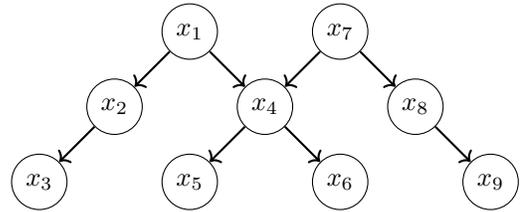
\begin{figure}[!ht]
    \centering
    \scalebox{1}{
        \begin{tikzpicture}            
            \node[draw, circle] (A) at (-1,1) {$x_1$};
            \node[draw, circle] (B) at (-2,0) {$x_2$};
            \node[draw, circle] (C) at (-3,-1) {$x_3$};
            \node[draw, circle] (D) at (0,0) {$x_4$};
            \node[draw, circle] (E) at (-1,-1) {$x_5$};
            \node[draw, circle] (F) at (1,-1) {$x_6$};
            \node[draw, circle] (G) at (1,1) {$x_7$};
            \node[draw, circle] (H) at (2,0) {$x_8$};
            \node[draw, circle] (I) at (3,-1) {$x_9$};
            \draw[->, thick] (A) -- (B);
            \draw[->, thick] (B) -- (C);
            \draw[->, thick] (A) -- (D);
            \draw[->, thick] (D) -- (E);
            \draw[->, thick] (D) -- (F);
            \draw[->, thick] (G) -- (D);
            \draw[->, thick] (G) -- (H);
            \draw[->, thick] (H) -- (I);
        \end{tikzpicture}
    }
    \caption{A causal DAG with nine variables.}
    \label{Example_Fig}
\end{figure}

\begin{table}[!ht]
\centering
\caption{The grouping process of the variables in Fig. \ref{Example_Fig} in SADA.}
\begin{tabular}{c|c|c|c|c}
\hline
 Step & $X$ & $X_1$ & C & $X_2$ \\ \hline
 Initial & \makecell{$x_1, x_2$, $x_4$, \\$x_5$, $x_6$, $x_7$, $x_8$} & $x_3$ & $\phi$ & $x_9$ \\ \hline
 Check $x_1$ & \makecell{$x_2$, $x_4$, $x_5$, \\$x_6$, $x_7$, $x_8$} & $x_1$, $x_3$ & $\phi$ & $x_9$ \\ \hline
Check $x_2$ & \makecell{$x_4$, $x_5$, $x_6$, \\$x_7$, $x_8$} & $x_1$, $x_2$, $x_3$ & $\phi$ & $x_9$ \\ \hline
Check $x_4$ & \makecell{$x_5$, $x_6$, $x_7$, \\$x_8$} & $x_1$, $x_2$, $x_3$ & $x_4$ & $x_9$ \\ \hline
Check $x_5$ & $x_6$, $x_7$, $x_8$ & \makecell{$x_1$, $x_2$, $x_3$ \\ $x_5$} & $x_4$ & $x_9$ \\ \hline
Check $x_6$ & $x_7$, $x_8$ & \makecell{$x_1$, $x_2$, $x_3$ \\ $x_5$, $x_6$} & $x_4$ & $x_9$  \\ \hline
Check $x_7$ & $x_8$ & \makecell{$x_1$, $x_2$, $x_3$ \\ $x_5$, $x_6$} & $x_4$ & $x_7$, $x_9$ \\ \hline
Check $x_8$ & $\phi$ & \makecell{$x_1$, $x_2$, $x_3$ \\ $x_5$, $x_6$} & $x_4$ & $x_7$, $x_8$, $x_9$ \\ \hline
\end{tabular}
\label{SADA1}
\end{table}

Suppose that $X$ is grouped into $V_1,\ldots, V_K$ by the SADA's variable grouping procedure. 
SADA applies a causal structure learning algorithm such as DirectLiNGAM to each of the $K$ groups.
Let $\hat{G}_1=(V_1, E_1),\ldots,\hat{G}_K=(V_K, E_K)$ be the estimated $K$ DAGs for each group, where $E_k \subset V_k \times V_k$, $k=1,\ldots,K$ are the set of directed edges in $\hat{G}_k$. 
Then, the entire causal DAG is estimated by $\hat{G}=(X, E_1\cup \cdots \cup E_K)$.

However, SADA has several practical problems.
In SADA, the marginal model $\hat{G}_k$, $k=1,\ldots, K$ is not necessarily the causal model 
induced by $V_k$ even if we knew the correct CI relationships between variables.  
Assume that the true causal DAG is Fig \ref{Example_Diamond} and that $\theta \le 3$. 
Since $x_1 \indep x_4 \mid (x_2,x_3)$, 
we can initialize with $X_1 = \lbrace x_1 \rbrace$, $X_2 = \lbrace x_4 \rbrace$, and $C = \lbrace x_2, x_3 \rbrace$, which does not need further variable checking. 
As a result, we group the original variable set into $V_1 = \lbrace x_1, x_2, x_3\rbrace$ and $V_2 = \lbrace x_2, x_3, x_4\rbrace$. 
For $V_1$, the marginal model is defined by 
the sub-DAG induced by 
$x_1$, $x_2$ and $x_3$.
However, 
the marginal model with respect to $V_2$ is not the model defined by the sub-DAG induced by $(x_2,x_3,x_4)$ because $x_2 \notindep x_3$. 

\begin{figure}[!hbt]
    \centering
    \scalebox{1}{
        \begin{tikzpicture}            
            \node[draw, circle] (A) at (0,1) {$x_1$};
            \node[draw, circle] (B) at (-1,0) {$x_2$};
            \node[draw, circle] (C) at (1,0) {$x_3$};
            \node[draw, circle] (D) at (0,-1) {$x_4$};
            
            \draw[->, thick] (A) -- (B);
            \draw[->, thick] (A) -- (C);
            \draw[->, thick] (B) -- (D);
            \draw[->, thick] (C) -- (D);
        \end{tikzpicture}
    }
    \caption{An example of SADA not working.}
    \label{Example_Diamond}
\end{figure}
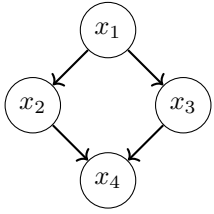

SADA does not guarantee that it always returns a DAG. 
Cycles may appear when DAG estimates for each variable group are merged. 
Suppose $V_1$ and $V_2$ share $x_1$ and $x_2$. If $x_1 \to x_2 \in E_1$, $x_2 \to x_1 \in E_2$, the cycle $x_1 \to x_2 \to x_1$ appears after merging $\hat{G}_1$ and $\hat{G}_2$.
Therefore, SADA needs to remove redundant edges in cycles. 

In addition, SADA requires too many CI tests for grouping variables. 
Consider the simple model where $x_1,\ldots,x_p$ are mutually independent and no edge exists in the true DAG. 
In this case, any $x_i$ and $x_j$ satisfy $x_i \indep x_j \mid (X \setminus \{x_i,x_j\})$.
In the worst case, $2^{p-2}$ CI tests are required to ensure $\hat{C}=\emptyset$. 
Thus, SADA is not feasible for high-dimensional data.

\subsection{Causality Partitioning (CAPA)}
\label{sec:CAPA}
Zhang et al. \cite{Zhang2020} proposed a refined grouping scheme called the causality partitioning (CAPA). 
Define the order of the CI test by the number of variables in the conditioning set.
CAPA reduces time complexity by setting an upper limit on the order of CI tests and using only CI tests of orders below that limit.

When the maximum order of the CI test is $\sigma_{\max}$, the time complexity of CAPA is shown to be 
$O(p^{\sigma_{\max} + 2})$. 
In implementation, $\sigma_{\max}$ is set to small numbers, like two or three.

The procedure for grouping $X$ into two subsets by CAPA is as follows.
Let $\sigma$ be the current order of the conditioned set. 
Let $M^\sigma = \{M^\sigma_{ij}\}$ be a $p \times p$ matrix such that
\begin{equation*}
    M^\sigma_{ij} = \left\{
        \begin{array}{ll}
            0&  x_i \indep x_j \mid Z, \exists Z \subset X \setminus \{x_i,x_j\} \text{ s.t. } |Z|=\sigma\\
            1 &  \text{otherwise}.
        \end{array}
        \right.
\end{equation*}
Let $G^\sigma = (X, E^\sigma)$ be an undirected graph with $M^\sigma$ as its adjacency matrix. 
Let $d_i = \sum_{r=1,r \ne i}^p M^\sigma_{ir}$, $i=1,\ldots,p$ be the degrees of $x_i$ in $G^\sigma$. 

\begin{enumerate}
    \item Initialize with $\sigma = 0$
    \item Initialize with $A=B=C=D=\emptyset$
    \item For all $x_i \in X$ in ascending order by $d_i$, $i=1,\ldots,p$
        do at most one of the followings so that $A \ne \emptyset$, $B \ne \emptyset$ at output
        \begin{itemize}
            \item[(a)] $A \leftarrow A \cup \{x_i\}$ if $\forall x_j \in B$, $(x_i,x_j)\notin E_\sigma$ 
            \item[(b)] $B \leftarrow B \cup \{x_i\}$ if $\forall x_j \in A$, $(x_i,x_j)\notin E_\sigma$
        \end{itemize}
    \item $C \leftarrow X \setminus (A \cup B)$
    \item For all $x_i \in A \cup B$\\
        $D \leftarrow D \cup \{x_i\}$ if $\exists x_j \in C$, 
        $(x_i,x_j) \in E_\sigma$
    \item $X_1 = A \cup C \cup D$, $X_2 = B \cup C \cup D$
    \item Return $\{X_1,X_2\}$ if $\max(|X_1|,|X_2|) < |X|$\\
        Else if $\sigma=\sigma_{\max}$, exit\\
        Else $\sigma \leftarrow \sigma+1$ and go to 2
\end{enumerate}

Let $S=X_1 \cap X_2$. 
When we know the true CI relationship between $X$, 
$S$ always d-separates $X_1 \setminus S$ and $X_2 \setminus S$ in the true causal DAG. 
Therefore, the marginal model for $X_1$ and $X_2$ is guaranteed to be defined by the sub-DAG induced by $X_1$ and $X_2$, respectively. 
In general, the ascending order of $d_i$, $i=1,\ldots,n$ is not unique, and the output $\{X_1, X_2\}$ may change depending on the choice of the ascending order. 
In Step 3, both 3(a) and 3(b) may be possible depending on $x_i$, in which case one of them is randomly selected, which may also change the output. 

We illustrate the CAPA procedure with $\sigma=1$ to create two subsets where the true causal DAG is the one in Fig 
\ref{Example_Fig}. 
When $\sigma=1$, $G_\sigma$ is an undirected graph in Fig. \ref{Example_Fig2}. 
Table \ref{CAPA1} shows an example of the results of Step 3 and Step 4 when $x_3,x_5,x_6,x_9,x_2,x_8,x_1,x_7,x_4$ is used as the order of $x_i$ by $d_i$. 
Since $C=\{x_4\}$, we have $D=\{x_1,x_5,x_6, x_7\}$.
Therefore $X_1$ and $X_2$ are 
\begin{align*}
    X_1&=\{x_1,x_2,x_3,x_4,x_5,x_6,x_7\}\\
    X_2&=\{x_1,x_4,x_5,x_6,x_7,x_8,x_9\}, 
\end{align*}
respectively. 

In this example, $|X_1|=7$, $|X_2|=7$, so the size of each group is not so small. 
CAPA can also be repeated recursively to split variables into smaller subsets. 
CAPA tends to fail to group finely in causal DAGs that contain vertices with high outdegree and low indegree.

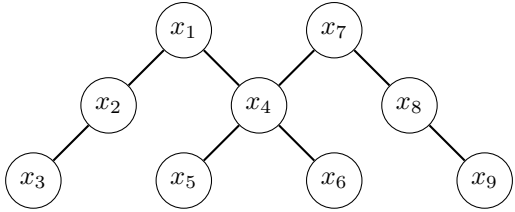
\begin{figure}[!ht]
    \centering
    \scalebox{1}{
        \begin{tikzpicture}            
            \node[draw, circle] (A) at (-1,1) {$x_1$};
            \node[draw, circle] (B) at (-2,0) {$x_2$};
            \node[draw, circle] (C) at (-3,-1) {$x_3$};
            \node[draw, circle] (D) at (0,0) {$x_4$};
            \node[draw, circle] (E) at (-1,-1) {$x_5$};
            \node[draw, circle] (F) at (1,-1) {$x_6$};
            \node[draw, circle] (G) at (1,1) {$x_7$};
            \node[draw, circle] (H) at (2,0) {$x_8$};
            \node[draw, circle] (I) at (3,-1) {$x_9$};
            \draw[thick] (A) -- (B);
            \draw[thick] (B) -- (C);
            \draw[thick] (A) -- (D);
            \draw[thick] (D) -- (E);
            \draw[thick] (D) -- (F);
            \draw[thick] (G) -- (D);
            \draw[thick] (G) -- (H);
            \draw[thick] (H) -- (I);
        \end{tikzpicture}
    }
    \caption{$G_\sigma$ with $\sigma=1$ for the DAG in Fig.\ref{Example_Fig}}
    \label{Example_Fig2}
\end{figure}

\begin{table}[!ht]
\centering
\caption{The grouping process of the variables in Fig. \ref{Example_Fig}.}
\begin{tabular}{c|c|c|c|c}
\hline
 order & $d_i$ & $A$ & $B$ & $C$\\ \hline
 $x_3$ & 1 & $x_3$ & $\emptyset$ & $\emptyset$\\ \hline
 $x_5$ & 1 & $x_3$ & $x_5$ & $\emptyset$\\ \hline
 $x_6$ & 1 & $x_3$ & $x_5, x_6$ & $\emptyset$\\ \hline
 $x_9$ & 1 & $x_3$ & $x_6, x_6,x_9$ & $\emptyset$\\ \hline
 $x_2$ & 2 & $x_2,x_3$ & $x_5, x_6,x_9$ & $\emptyset$ \\ \hline
 $x_8$ & 2 & $x_2,x_3$ & $x_5,x_6,x_8,x_9$ & $\emptyset$  \\ \hline
 $x_1$ & 2 & $x_1,x_2,x_3$ & $x_5,x_6,x_8,x_9$ & $\emptyset$ \\ \hline
 $x_7$ & 2 & $x_1,x_2,x_3$ & $x_5,x_6,x_7,x_8,x_9$ & $\emptyset$\\ \hline
 $x_4$ & 4 & $x_1,x_2,x_3$ & $x_5,x_6,x_7,x_8,x_9$ & $x_4$\\ 
 \hline
\end{tabular}
\label{CAPA1}
\end{table}

\subsection{Repetitive Causal Discovery (RCD)}
\label{sec:RCD}
RCD \cite{Maeda2020} is a novel causal structure learning algorithm that can be applied even when the model contains latent confounders.
RCD also assumes LiNGAM, i.e., the causal relationships are linear, and the exogenous variables are independently distributed as continuous non-Gaussian distributions.
The processes in RCD are divided into the following three parts: ancestral relationship finding, parental relationship finding, and confounder determining. 

The first step in the RCD is to determine the ancestral relationship between variables by repeatedly conducting simple linear regressions and 
independence tests on each variable pair, in a similar way as when determining causal order in DirectLiNGAM, and to create a list of ancestor sets for each variable.
Let $\widehat{Anc}_i$ be the estimated ancestor set of $x_i$.

Once ancestral relationships are estimated, RCD extracts parent-child relationships between each pair of variables using CI tests. 
Assume $x_j \in \widehat{Anc}_i$. 
For $x_i$ and $x_j$, if $x_i  \notindep x_j \mid \widehat{Anc}_i \setminus \{x_j\}$, then $x_j$ can be determined as a parent of $x_i$.
If a variable pair in the RCD's output for which the direction of causality cannot be identified exists, we conclude that there are latent confounders between them.

Our study assumes that the causal model does not contain latent confounders. 
RCD can also be applied to the model without latent confounders. 
RCD is interpreted as another divide-and-conquer approach to improve scalability because it estimates the causal DAG by estimating the parent-child relationship between variables within each variable's ancestor set.
Our proposed method also uses the algorithm to find ancestral relationships in RCD to group variables.

RCD may return a graph that contains cycles even when the sample size is large.
This is because errors in estimating ancestral relationships lead to an incorrect estimation of the parent-child relationships. 

\subsection{Positioning of the Proposed Method}
The proposed method also assumes LiNGAM (\ref{model:LiNGAM}). 
Like SADA and CAPA, the proposed method begins by grouping variables into multiple subsets based on the ancestral relationships between variables. 
The ancestral relationships are estimated in the same way as in RCD. 
When the true causal DAG is connected, the proposed method defines the variable grouping by the maximal elements in the family of the union of each variable and its ancestors. Thus, if the ancestral relationships are correctly estimated, variables are partitioned into as many groups as the number of sink nodes in the true causal DAG. 
The proposed method groups finely for DAGs with high outdegree and low indegree. 
The proposed method's time complexity for variable grouping is $O(p^3)$, which is the same order as CAPA with $\sigma_{\max}=1$.

When the ancestral relationships among the variables are correctly estimated, the marginal model for variables in each group obtained by the proposed method is the LiNGAM defined by the sub-DAG induced by the variables in each group. 

In RCD, the estimated ancestral relationships are used to estimate parent-child relationships among variables. In contrast, the estimated ancestral relationships are only used to group variables in the proposed method. For each group, DirectLiNGAM is applied to estimate sub-DAGs, which are then merged to estimate the entire causal DAG.

Section \ref{sec:3} provides a more comprehensive description of our proposed method.
Section \ref{sec:4} compares the performance of the proposed method, the original DirectLiNGAM, CAPA, and RCD in the absence of latent confounders by computer experiments. 

\section{Causal Ancestral-Relationships-based Groupoing (CAG)}
\label{sec:3}
This section will introduce our proposed algorithm in detail. 
Let $X = (x_1,\ldots,x_p)^\top$ be a $p$-dimensional random vector. 
Let $G=(X, E)$ be the true causal DAG, where $E \subset X \times X$ is the set of edges in $G$. 
We assume that $X$ follows LiNGAM (\ref{model:LiNGAM}) without latent confounders.
Let $Anc_i$ and $Pa_i$ be the sets of ancestors and parents of $x_i$ in $G$, respectively. 
$CA_{ij} := Anc_i \cap Anc_j$ is the set of common ancestors of $x_i$ and $x_j$ in $G$.


\subsection{Finding the Ancestral Relationships Between the Variables}
\label{sec:ancestor}
This subsection summarizes the algorithm for finding the ancestral relationships between variables in $X$ in RCD \cite{Maeda2020}. 
To determine the ancestral relationship between $x_i$ and $x_j$, $i \ne j$, Maeda and Shimizu \cite{Maeda2020} considered the simple regression models
\begin{equation}
\begin{aligned}
    \label{model:no ancestors}
    x_i &= \beta_{ij}x_j + u_i,\\
    x_j &= \beta_{ji}x_i + u_j, 
\end{aligned}
\end{equation}
where $u_i$ and $u_j$ are error terms. 
Maeda and Shimizu \cite{Maeda2020} focused on the independence relationship between the independent variable and the error term in each model.
The following proposition holds for the ancestral relationship between $x_i$ and $x_j$.
\begin{proposition}[Maeda and Shimizu \cite{Maeda2020}]
    \label{Prop:1}
    One of the following four conditions holds for the ancestral relationship between $x_i$ and $x_j$.
    \begin{enumerate}
        \item If $x_i \indep x_j$, then $x_i \notin Anc_j \wedge x_j \notin Anc_i$.
        \item If $x_j \indep u_i$, then $x_i \in Anc_j$.
        \item If $x_i \indep u_j$, then $x_j \in Anc_i$.
        \item If $x_i \notindep u_j \wedge x_j \notindep u_i$, then $CA_{ij} \neq \emptyset$.
    \end{enumerate}
    
\end{proposition}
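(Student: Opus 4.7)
My plan is to prove each of the four implications using the Darmois--Skitovich theorem as the principal tool: for mutually independent non-Gaussian random variables $S_1,\ldots,S_n$, two linear combinations $\sum_k \alpha_k S_k$ and $\sum_k \beta_k S_k$ are independent if and only if $\alpha_k\beta_k=0$ for every $k$. Because LiNGAM admits the closed form (\ref{model:LiNGAM2}), the first step is to expand $x_i=\sum_k c_{ik}e_k$ where $C=(I-B)^{-1}$ and $c_{ik}\ne 0$ if and only if $k=i$ or $x_k\in Anc_i$. This gives a clean dictionary between ancestral relationships and the supports of the exogenous expansions, and reduces every independence statement in the proposition to an arithmetic condition on the coefficients $c_{ik}$ and the regression slopes $\beta_{ij},\beta_{ji}$.

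For condition 1, applying Darmois--Skitovich directly to the expansions of $x_i$ and $x_j$ gives $c_{ik}c_{jk}=0$ for every $k$. Taking $k=i$ (where $c_{ii}=1\ne 0$) forces $c_{ji}=0$, i.e.\ $x_i\notin Anc_j$, and symmetrically $x_j\notin Anc_i$. For conditions 2 and 3, I would substitute the LiNGAM expansions into $u_i=x_i-\beta_{ij}x_j$ (resp.\ $u_j=x_j-\beta_{ji}x_i$), compute $\beta_{ij}=\mathrm{Cov}(x_i,x_j)/\mathrm{Var}(x_j)$ (resp.\ $\beta_{ji}$) from the exogenous expansion, and apply Darmois--Skitovich between the residual and $x_j$ (resp.\ $x_i$). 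The resulting coefficient-vanishing conditions force, for every index $k$ in the exogenous support of $x_j$, the relation $c_{ik}=\beta_{ij}c_{jk}$; specializing to $k=j$ determines $\beta_{ij}$, and the compatibility of this single slope with all remaining indices pins down which of the ancestor-set configurations is consistent with the observed residual independence.

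For condition 4, I would take the contrapositive: assuming $CA_{ij}=\emptyset$ and, to exclude case 1, $x_i\notindep x_j$, I aim to show that at least one of $x_j\indep u_i$ or $x_i\indep u_j$ must hold. Without common ancestors, the shared exogenous support between $x_i$ and $x_j$ can only be $\{e_i\}$ or $\{e_j\}$, according to whether $x_i\in Anc_j$ or $x_j\in Anc_i$, and in either scenario a single OLS slope is able to cancel all shared coefficients simultaneously, delivering the required residual independence through Darmois--Skitovich. The main obstacle I expect is precisely the bookkeeping in this last step: verifying that one free parameter $\beta_{ij}$ (or $\beta_{ji}$) is enough to neutralize every common-coefficient term across the entire exogenous support, and that the vanishing of $CA_{ij}$ is exactly the structural property that makes this simultaneous cancellation possible. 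Exhaustiveness of the four cases then follows by observing that they partition the truth table of the three independence statements $x_i\indep x_j$, $x_j\indep u_i$, $x_i\indep u_j$.
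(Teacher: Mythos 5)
First, a point of reference: the paper does not prove Proposition~\ref{Prop:1} at all --- it is imported verbatim from Maeda and Shimizu \cite{Maeda2020} --- so the comparison is with the standard argument behind that lemma, and your Darmois--Skitovich route via $X=(I-B)^{-1}e$ is exactly that standard argument. Your handling of condition 1 is fine, and your contrapositive plan for condition 4 works; in fact the ``bookkeeping'' you worry about is a one-liner you missed: if $CA_{ij}=\emptyset$ and, say, $x_j\in Anc_i$, then every ancestor of $x_j$ would be a common ancestor, so $Anc_j=\emptyset$, $x_j=e_j$, the shared exogenous support is the single index $j$, and the OLS slope $\beta_{ij}=c_{ij}$ cancels it. One caveat worth stating explicitly: your dictionary ``$c_{ik}\neq 0$ iff $k=i$ or $x_k\in Anc_i$'' needs a no-path-cancellation (faithfulness-type) assumption in the direction $x_k\in Anc_i\Rightarrow c_{ik}\neq 0$, and condition 1 uses precisely that direction.

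The genuine gap is in conditions 2 and 3: you never say \emph{which} endpoint comes out as the ancestor, and ``pins down which of the ancestor-set configurations is consistent'' is the entire content of the claim. If you push your own constraint ($c_{ik}=\beta_{ij}c_{jk}$ for every $k$ in the exogenous support of $x_j$) to its conclusion, you get a definite answer: $k=i$ cannot lie in that support (otherwise $1=\beta_{ij}c_{ji}$ and the $k=j$ constraint give $c_{ij}=\beta_{ij}\neq 0$, making $x_i$ and $x_j$ ancestors of each other, contradicting acyclicity), hence $x_i\notin Anc_j$; and since $x_i\notindep x_j$ forces some shared index, the same constraints force $c_{ij}=\beta_{ij}\neq 0$, i.e.\ $x_j\in Anc_i$. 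So $x_j\indep u_i$ (with dependence) yields that the \emph{regressor} $x_j$ is the ancestor of $x_i$ --- the transpose of the printed condition 2, which asserts $x_i\in Anc_j$. A two-variable check makes this concrete: for $x_i\to x_j$ with $x_i=e_i$, $x_j=bx_i+e_j$, one has $u_j=e_j\indep x_i$ while $x_j\notindep u_i$, so it is the hypothesis of condition 3 that fires, yet the ancestor is $x_i$, not $x_j$. In other words, the proposition as printed has the conclusions of conditions 2 and 3 interchanged relative to Maeda--Shimizu's Lemma 1; your proof, carried to completion, establishes the corrected version, and a complete write-up must state the direction explicitly and flag this discrepancy rather than leave it implicit.
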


This algorithm first checks which conditions 1 through 4 in Proposition \ref{Prop:1} holds for each variable pair $(x_i,x_j)$.
For implementation, the error terms $u_i$ and $u_j$ are replaced with the OLS residuals. 
If $(x_i,x_j)$ satisfies condition 4, the determination of the ancestral relationship between $(x_i,x_j)$ is withheld. 
Let $R$ be the set of variable pairs $(x_i, x_j)$ that satisfies condition 4. 
Let $CA^*_{ij}$ be the set of common ancestors of $(x_i,x_j)$ found during checking Proposition 1 
for all pairs $(x_i,x_j)$. 

Assume that $(x_i,x_j) \in R$.  
Then, $CA_{ij}^*$ forms part of the backdoor path for $x_i$ and $x_j$.
To remove the influence of $CA^*_{ij}$ from $x_i$ and $x_j$, consider the regression models 
\begin{equation}
    \begin{aligned}
        \label{model: no ancestors}
        x_i &= \bm{\alpha}_{ij}^\top \cdot CA^*_{ij} + v_i, \\
        x_j &= \bm{\alpha}_{ji}^\top \cdot CA^*_{ij} + v_j,   
    \end{aligned}
\end{equation}
where we identify $CA^*_{ij}$ with the vector of common ancestors of $(x_i,x_j)$. 
Furthermore, consider the following regression models for the error terms $v_i$ and $v_j$, 
\begin{equation}
    \begin{aligned}
        \label{eq: CA1}
        v_i &=  \beta_{ij} v_j + u_i,\\
        v_j &=  \beta_{ji} v_i + u_j.
    \end{aligned}
\end{equation}
Then Proposition \ref{Prop:1} is generalized as follows. 
\begin{proposition}[Maeda and Shimizu \cite{Maeda2020}]
\label{Prop:2}
     One of the following four conditions holds for the ancestral relationship between $(x_i, x_j) \in R$.
    \begin{enumerate}
        \item If $v_i \indep v_j$, then $x_i \notin Anc_j \wedge x_j \notin Anc_i$.
        \item If $v_j \indep u_i$, then $x_i \in Anc_j$.
        \item If $v_i \indep u_j$, then $x_j \in Anc_i$.
        \item If $v_i \notindep u_j$ and $v_j \notindep u_i$, 
        then $CA_{ij} \setminus CA^*_{ij} \ne \emptyset$. 
    \end{enumerate}
\end{proposition}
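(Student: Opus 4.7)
My plan is to reduce Proposition \ref{Prop:2} to Proposition \ref{Prop:1} by arguing that the pair $(v_i, v_j)$ inherits a LiNGAM-type structure in which the role of the common ancestors in $CA^*_{ij}$ has been absorbed into the coefficients. The starting point is the reduced form $X=(I-B)^{-1}e$ guaranteed by LiNGAM, so that each $x_k$ is a fixed linear combination of the mutually independent non-Gaussian noises $e_1,\ldots,e_p$, with $e_\ell$ contributing to $x_k$ only if $x_\ell\in Anc_k\cup\{x_k\}$. Since OLS is a linear operation, the residuals $v_i,v_j$ obtained from (\ref{model: no ancestors}) are themselves linear combinations of the $e_\ell$'s, and the same is true of the residuals $u_i,u_j$ defined in (\ref{eq: CA1}).

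The central structural claim I would prove next is that projecting $x_i$ onto $\mathrm{span}(CA^*_{ij})$ eliminates precisely the contributions of those exogenous noises whose influence on $x_i$ travels only through $CA^*_{ij}$. With this established, $v_i$ retains $e_i$ (with a nonzero coefficient) together with contributions from noises of variables in $Anc_i\setminus CA^*_{ij}$, and an analogous decomposition holds for $v_j$. Consequently, the only noises $e_\ell$ with $\ell\notin\{i,j\}$ that can appear in both $v_i$ and $v_j$ are those of variables in $CA_{ij}\setminus CA^*_{ij}$. The setup is now formally identical to the one in Proposition \ref{Prop:1}, with $(x_i,x_j)$ replaced by $(v_i,v_j)$ and the ancestor sets replaced by $Anc_i\setminus CA^*_{ij}$ and $Anc_j\setminus CA^*_{ij}$, respectively.

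From here the four cases follow by the same Darmois--Skitovich-style reasoning that underlies Proposition \ref{Prop:1}: $v_i\indep v_j$ iff no exogenous noise is shared between the linear representations of $v_i$ and $v_j$, which corresponds to $x_i\notin Anc_j\wedge x_j\notin Anc_i$ together with $CA_{ij}\setminus CA^*_{ij}=\emptyset$; $v_j\indep u_i$ captures the case $x_i\in Anc_j$, because after projecting out the linear effect of $v_j$ on $v_i$ the only remaining shared source would have had to come from a residual common ancestor; the third case is symmetric; and in the fourth case at least one $e_\ell$ with $x_\ell\in CA_{ij}\setminus CA^*_{ij}$ must persist in both residuals.

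The main obstacle is the bookkeeping in the second paragraph. I must verify cleanly that the linear projection onto $\mathrm{span}(CA^*_{ij})$ leaves no spurious shared exogenous term from ancestors of $CA^*_{ij}$; equivalently, for every $x_\ell$ whose only path into either $x_i$ or $x_j$ is routed through $CA^*_{ij}$, the column of $(I-B)^{-1}$ indexed by $\ell$, restricted to rows $i$ and $j$, must already lie in the span of the corresponding columns of the variables in $CA^*_{ij}$. This is essentially the multivariate version of the single-pair algebra buried in the proof of Proposition \ref{Prop:1} in \cite{Maeda2020}; once it is secured, the rest of the argument is a direct translation of Proposition \ref{Prop:1} applied to $(v_i,v_j)$.
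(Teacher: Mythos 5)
First, a point of reference: the paper does not prove Proposition~\ref{Prop:2} at all --- it is imported verbatim from Maeda and Shimizu \cite{Maeda2020} --- so there is no in-paper proof to compare your route against; your argument has to stand on its own.

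On its own terms, the proposal has a genuine gap exactly at the step you yourself flag as ``the main obstacle,'' and as stated that step is false. Projecting $x_i$ onto $\mathrm{span}(CA^*_{ij})$ does \emph{not} in general cancel the contributions of the noises of the variables in $CA^*_{ij}$ or of their ancestors. Concretely, take $x_a=e_a$, $x_c=x_a+e_c$, $x_i=x_c+x_a+e_i$, $x_j=x_c+x_a+e_j$, with $CA^*_{ij}=\{x_c\}$ but $CA_{ij}=\{x_a,x_c\}$. Then $x_i=2e_a+e_c+e_i$ and $x_c=e_a+e_c$, so the population regression coefficient is $(2\sigma_a^2+\sigma_c^2)/(\sigma_a^2+\sigma_c^2)\neq 1$ and $v_i$ (likewise $v_j$) retains a nonzero $e_c$-component: the residuals share the noise of a variable \emph{inside} $CA^*_{ij}$, contradicting your claim that the only shared noises besides $e_i,e_j$ come from $CA_{ij}\setminus CA^*_{ij}$, and with it the asserted ``formal identity'' with Proposition~\ref{Prop:1} (in particular your biconditional reading of case 1). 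The cancellation you want is exact only when the regressor set is ancestrally closed within the common ancestors --- e.g.\ when $CA^*_{ij}= CA_{ij}$ --- because then the map from $\{e_c\}_{c\in CA^*_{ij}}$ to $\{x_c\}_{c\in CA^*_{ij}}$ is triangular with unit diagonal and the two spans coincide. A correct argument therefore has to be restructured: establish exact cancellation under $CA^*_{ij}\supseteq CA_{ij}$ and run the Darmois--Skitovich case analysis there; obtain case 4 contrapositively (if $CA_{ij}\setminus CA^*_{ij}=\emptyset$ one of cases 1--3 must occur, so simultaneous dependence forces an unidentified common ancestor); and for cases 1--3 show that \emph{any} surviving shared noise, whether from $CA_{ij}\setminus CA^*_{ij}$ or from $CA^*_{ij}$ itself as above, makes the relevant regressor--residual pair dependent, which also needs a faithfulness-type no-cancellation condition on total effects that your sketch never states. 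A secondary weakness: your one-sentence justifications of cases 2 and 3 assert rather than derive the direction of the ancestral relation, which is the entire content of those cases; as written they would equally well ``justify'' the opposite conclusion, so the directional bookkeeping must be carried out explicitly.
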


Proposition \ref{Prop:1} is the case where $CA_{ij}=\emptyset$. 
For implementation, the error terms $v$ and $u$ are replaced with OLS residuals. 
If $(x_i,x_j)$ satisfies condition 4, the determination of the ancestral relationship between $x_i$ and $x_j$ is withheld. After checking Proposition \ref{Prop:2} for all $(x_i, x_j) \in R$, update $CA^*_{ij}$ and $R$. If $R \ne \emptyset$, recheck Proposition \ref{Prop:2}. 
Theoretically, if the model contains no latent confounders, the procedure in Proposition \ref{Prop:2} can be repeated until $R = \emptyset$ to completely determine the ancestral relationship of all $(x_i,x_j)$.

The pseudo-code of this algorithm is described in Algorithm \ref{alg1} in Appendix A. 
Since the linearity is assumed between variables, Pearson's correlation test \cite{Kowalski1972} can be used 
to test the marginally independence, $x_i \indep x_j$ and $v_i \indep v_j$. 
For testing the independence of independent variables and residuals, Shimizu and Maeda \cite{Maeda2020, Maeda2022} used the Hilbert-Schmidt independence criterion (HSIC) \cite{Gretton2007}. 
This paper uses a Kernel-based conditional independence (KCI) test \cite{Zhang2012kernel} because the accuracy of the test remains the same, and the computation time is faster.

Simply checking the conditions of Proposition \ref{Prop:1} or Proposition \ref{Prop:2} may yield ancestor relationships such that the estimated graph contains directed cycles due to errors in testing.
The proposed method adds a heuristic procedure to avoid directed cycles.
See Algorithm \ref{alg1} in the Appendix for details.

\subsection{Grouping Variables and Merging Results}
This subsection introduces a procedure for grouping $X$ into several subsets according to the estimated ancestral relationships among variables and merging the estimated causal DAGs for each group. 

Let $\widehat{Anc}_i$ be the estimated ancestor set of $x_i$. 
Define ${\cal V}$ by the output of the following procedures.
\begin{enumerate}
    \item Let ${\cal V}_0$ be the family of maximal sets in 
        $$\left\{ \{x_i\} \cup \widehat{Anc}_i, \; i=1,\ldots,p\right\}$$
    \item[2a)] If $|\bm{v}| > 1$ for all $\bm{v} \in {\cal V}_0$, then ${\cal V} \leftarrow {\cal V}_0$
    \item[2b)] Otherwise, ${\cal V} \leftarrow {\cal V}_0$\\
        For all $\bm{v} \in {\cal V}_0$ with $|\bm{v}|=1$
        $$
        {\cal V} \leftarrow 
        ({\cal V} \setminus \bm{v})
        \cup 
        \left\{
        \bigcup_{\bm{v}^\prime \in {\cal V}_0, \bm{v}^\prime \ne \bm{v}} 
        \{\bm{v} \cup \bm{v}^\prime\} 
        \right\}
        $$ 
    \item[3)] Return ${\cal V}$
\end{enumerate}

\begin{definition}
    \label{def:grouping}
    Define ${\cal V}$ as the grouping of $X$ obtained by the ancestral relationships among $X$.
\end{definition}

We call the series of procedures for obtaining ${\cal V}$ the causal ancestral-relationship-based grouping (CAG).
If the ancestral relationships among $X$ are correctly estimated and ${\cal V}={\cal V}_0$, each element of $\mathcal{V}$ is the union of a sink node in $G$ and its ancestors.
We also note that $\bigcup_{\bm{v} \in {\cal V}}  \bm{v} = X$.
Now, we have the following theorem.
\begin{theorem}
    \label{thm:group}
    Assume that we know the correct ancestral relationships between variables. 
    Define $G_{\bm{v}}$ be the sub-DAG of $G$ induced by $\bm{v} \in {\cal V}$.
    Then, the marginal model for $\bm{v}$ is the LiNGAM defined by $G_{\bm{v}}$. 
\end{theorem}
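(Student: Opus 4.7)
The plan is to reduce the theorem to the elementary fact that for a LiNGAM the marginal on any \emph{ancestrally closed} subset $A \subseteq X$ (meaning $x_i \in A$ implies $Anc_i \subseteq A$) is itself the LiNGAM defined by the induced sub-DAG $G_A$ with errors $\{e_i : x_i \in A\}$. Since we are assuming $\widehat{Anc}_i = Anc_i$ for all $i$, it suffices to check that every $\bm{v} \in {\cal V}$ is ancestrally closed and then invoke this marginalization fact.

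First I would verify ancestral closure of each $\bm{v} \in {\cal V}$. For any $i$, the set $\{x_i\} \cup Anc_i$ is ancestrally closed by transitivity of the ancestor relation, so in branch (2a), where ${\cal V} = {\cal V}_0$ consists exactly of such sets, the claim is immediate. In branch (2b) the only subtlety is the singletons. If $\{x_i\} \cup Anc_i = \{x_i\}$ appears as a maximal element of ${\cal V}_0$, then on the one hand $Anc_i = \emptyset$, and on the other hand maximality under inclusion forces $x_i \notin Anc_j$ for every $j \neq i$, because otherwise $\{x_j\} \cup Anc_j \supsetneq \{x_i\}$. Hence $x_i$ is an isolated vertex of $G$, and any augmented set of the form $\{x_i\} \cup \bm{v}'$ constructed in branch (2b) is still ancestrally closed since $\bm{v}'$ already is and $x_i$ carries no incident edges.

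The second step is the marginalization argument itself. For any ancestrally closed $\bm{v}$ and any $x_i \in \bm{v}$ one has $Pa_i \subseteq Anc_i \subseteq \bm{v}$, so the structural equation $x_i = \sum_{j \in Pa_i} b_{ij} x_j + e_i$ uses only variables in $\bm{v}$ and the error $e_i$. Stacking these equations yields $X_{\bm{v}} = B_{\bm{v},\bm{v}} X_{\bm{v}} + e_{\bm{v}}$ with $e_{\bm{v}}$ mutually independent and continuous non-Gaussian (a subvector of $e$); since the induced DAG $G_{\bm{v}}$ retains exactly the edges $x_j \to x_i$ with both endpoints in $\bm{v}$, the matrix $B_{\bm{v},\bm{v}}$ is the coefficient matrix compatible with $G_{\bm{v}}$, and this is precisely the LiNGAM defined by $G_{\bm{v}}$.

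The main obstacle is the bookkeeping around branch (2b); once one observes that the maximality condition in the construction of ${\cal V}_0$ already forces every singleton element to correspond to an isolated vertex of $G$, ancestral closure is preserved under the augmentation step, and the theorem reduces to the standard observation that a LiNGAM restricted to a parent-closed index set is again a LiNGAM on the induced subgraph.
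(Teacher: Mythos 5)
Your proof is correct, and it takes a somewhat different route from the paper's. The paper argues distributionally: it writes the joint density as $\prod_i p(x_i \mid Pa_i)$, observes that for $x_i \in \bm{v}$ the set $Pa_i$ contains no variable outside $\bm{v}$, and integrates out $X \setminus \bm{v}$ in reverse causal order to obtain $p(\bm{v}) = \prod_{i : x_i \in \bm{v}} p(x_i \mid Pa_i)$, i.e.\ the LiNGAM defined by $G_{\bm{v}}$. You instead argue functionally: since each $\bm{v} \in {\cal V}$ is ancestrally (hence parent-) closed, the structural equations for the variables in $\bm{v}$ are self-contained, so stacking them gives $X_{\bm{v}} = B_{\bm{v},\bm{v}} X_{\bm{v}} + e_{\bm{v}}$ with a subvector of the original independent non-Gaussian errors, which is directly the LiNGAM on $G_{\bm{v}}$. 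Both proofs hinge on the same key fact (closure of $\bm{v}$ under parents), but your version makes explicit two points the paper leaves implicit: the verification that every element of ${\cal V}$ is ancestrally closed, including the branch (2b) case, where you correctly note that a maximal singleton in ${\cal V}_0$ forces $Anc_i = \emptyset$ and $x_i \notin Anc_j$ for all $j$, so $x_i$ is an isolated vertex and the augmented sets $\{x_i\} \cup \bm{v}'$ remain closed (the paper only remarks separately that the theorem still holds when ${\cal V}_0$ contains singletons). Your structural-equation route also delivers the linear-model form with non-Gaussian errors immediately, whereas the paper's density route delivers the Markov factorization and identifies it with the LiNGAM through the unchanged conditionals; the paper's argument is shorter, yours is more self-contained on the bookkeeping.
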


\begin{proof}
    The probability density function of $X$ is written by 
    \begin{align*}
        p(\bm{x}) &= \prod_{i=1}^p p(x_i \mid Pa_i)\\
        &= \prod_{i:x_i \in \bm{v}} p(x_i \mid Pa_i) \cdot 
        \prod_{j:x_j \notin \bm{v}} p(x_j\mid Pa_j).
    \end{align*}
    We note that when $x_i \in \bm{v}$, $Pa_i$ does not include the variables $x_j \notin \bm{v}$. 
    By integrating $p(\bm{x})$ out according to the reverse causal order of $X \setminus \bm{v}$, 
    we have
    \[
    p(\bm{v}) = \prod_{i:x_i \in \bm{v}} p(x_i \mid Pa_i), 
    \]
    which is the LiNGAM defined by $G_{\bm{v}}$.
\end{proof}

Theorem \ref{thm:group} claims that we can consistently estimate the sub-DAGs $G_{\bm{v}}$ for $\bm{v} \in {\cal V}$ by applying a causal structure learning algorithm such as DirectLiNGAM to $\bm{v}$.
Once we obtain the estimates $\hat{G}_{\bm{v}}=(\bm{v},E_{\bm{v}})$, where $E_{\bm{v}} = (\bm{v} \times \bm{v}) \cap E$, the entire causal graph $G$ can be estimated by $\hat{G}=(X, \bigcup_{\bm{v} \in {\cal V}} E_{\bm{v}})$. 

When the sample size is small, many ancestral relationships cannot be detected due to the type II error of the CI test in the CAG, so there may be many variables for which the ancestor set is empty, resulting in many groups consisting of a single variable in ${\cal V}_0$. 
Using such grouping would make the causal DAG estimates too sparse. 
So here, the group consisting of a single variable in ${\cal V}_0$ is merged with the other groups according to the above procedure 2b.
Note that Theorem \ref{thm:group} holds even if ${\cal V}_0$ contains a group consisting of one variable.


Table \ref{tab:3} presents the correct $Anc_i$ and $\bm{v}_i$ for $i=1,\ldots,9$ of the DAG in Figure \ref{Example_Fig}.
In this case, we can easily see that ${\cal V}_0={\cal V}=\{\bm{v}_3,\bm{v}_5,\bm{v}_6,\bm{v}_9\}$. 
By applying the causal structure learning algorithm to each element of ${\cal V}$, we can consistently estimate the sub-DAGs as shown in Fig. \ref{fig:group}.
By merging the sub-DAGs as $(X, E_3\cup E_5 \cup E_6 \cup E_9)$, we can obtain the true causal DAG in Figure 1.

\begin{table}[!bth]
\centering
\caption{$Anc_i$ and $\bm{v}_i$ of the DAG in Fig. 1.}
\label{tab:3}
\setlength{\tabcolsep}{7mm}{
\begin{tabular}{c|c|c}
    \hline
    Variables & $Anc_i$ & $\bm{v}_i$ \\ \hline
    $x_1$ & $\phi$ & $x_1$ \\ \hline
    $x_2$ & $x_1$ & $x_1, x_2$ \\ \hline
    $x_3$ & $x_1, x_2$ & $x_1, x_2, x_3$ \\ \hline
    $x_4$ & $x_1, x_7$ & $x_1, x_4, x_7$ \\ \hline
    $x_5$ & $x_1, x_4, x_7$ & $x_1, x_4, x_5, x_7$ \\ \hline
    $x_6$ & $x_1, x_4, x_7$ & $x_1, x_4, x_5, x_6$ \\ \hline
    $x_7$ & $\phi$ & $x_7$ \\ \hline
    $x_8$ & $x_7$ & $x_7, x_8$ \\ \hline
    $x_9$ & $x_7, x_8$ & $x_7, x_8, x_9$ \\ 
    \hline
\end{tabular}
}
\end{table}

\begin{figure}[!ht]
    \centering
    \includegraphics[width=0.45\textwidth]{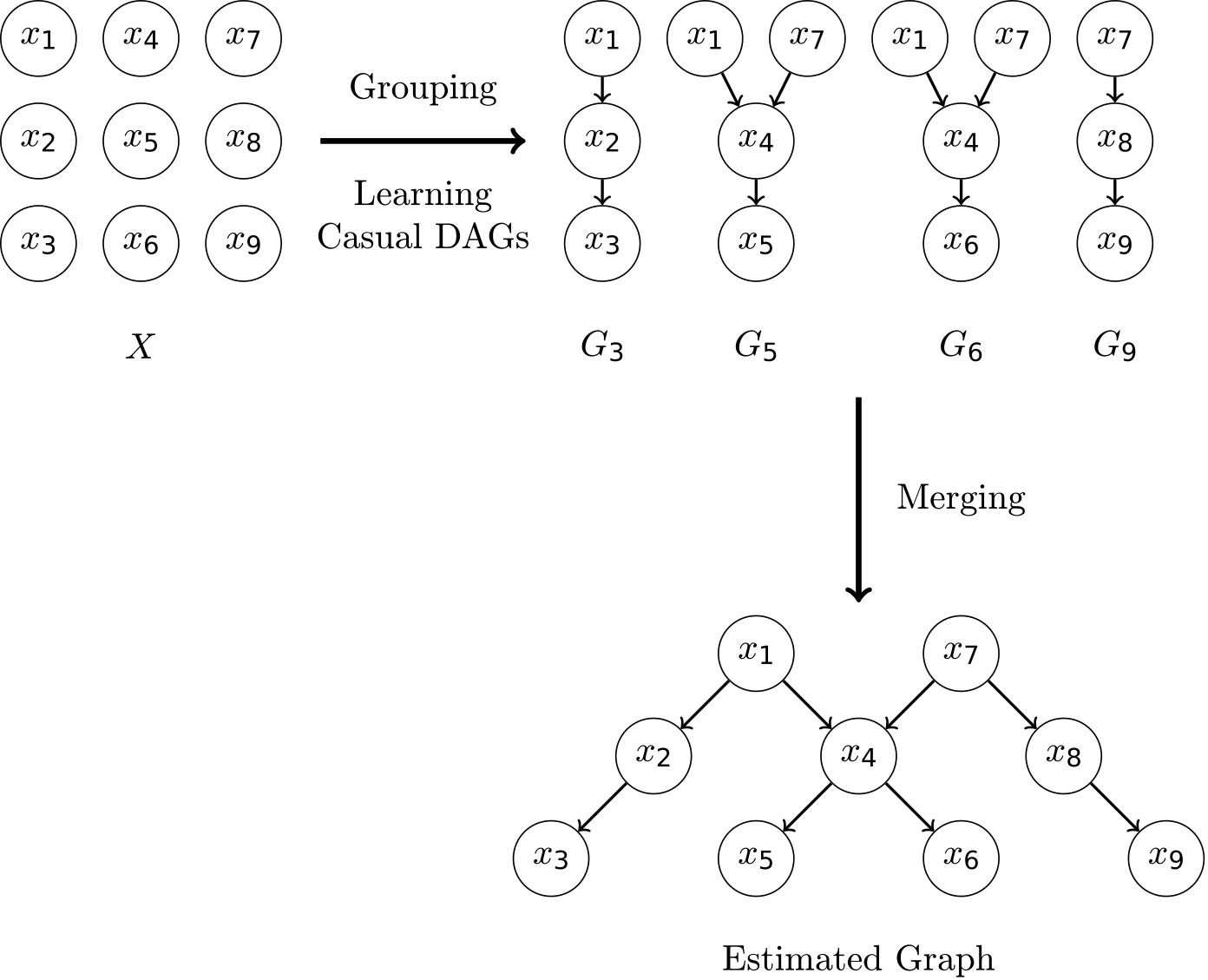}
    \caption{An example of the process of the CAG}
    \label{fig:group}
\end{figure}

In the same way as SADA, the estimated entire DAG may contain cycles due to errors in learning DAGs for each group, even if the correct ancestral relationships are estimated.
The proposed method implements topological sorting on the merged graph to check if a directed cycle exists in the merged graph. 
In SADA, the significance of each coefficient $b_{ij}$ for all edges in a cycle is assessed by the Wald test, and the least significant edge in the cycle is removed from the cycle. 
In the proposed method, since the ancestral relationships between variables are estimated, directed cycles can also be eliminated by removing edges in the cycle that contradict the estimated ancestral relationships. 
Another possible approach would be removing the edge in the cycle with the smallest absolute value of coefficient $|b_{ij}|$. The computer experiment in Section \ref{sec:4} compares the performance of these methods for removing cycles. 

\subsection{Time Complexity of Variable Grouping and Minimum Required Sample Size}
It suffices to focus on the number of CI tests to evaluate the time complexity of variable grouping by CAG. 
The case that requires the most CI tests is when the true causal DAG is fully connected, i.e., $Anc_i=Pa_i$ for all $x_i \in X$. 
In that case, since the proposed algorithm performs CI tests while removing a source node one by one from a DAG, the number of required CI tests is 
\begin{align*}          
    &
    \binom{p}{2} + \binom{p-1}{2} + \cdots + \binom{2}{2} =O(p^3).
\end{align*}
As mentioned in Section \ref{sec:LiNGAM}, the time complexity of DirectLiNGAM is $O(np^3M^2 + p^4M^3)$.
Even allowing for the time complexity of grouping variables, if $n$ is small relative to $p$, CAG is expected to reduce overall computation time compared to the original DirectLiNGAM without variable grouping.

Let $n_{\max}$ be the largest cardinality of $\bm{v} \in {\cal V}$. 
Then, the required sample size is reduced to $n_{\max} + 1$. 
\section{Computer experiments}
\label{sec:4}
This section details the results of computer experiments to compare the estimation accuracy and computation time of CAG with the original DirectLiNGAM without variable grouping, RCD in the absence of latent confounders, and CAPA with $\sigma_{\max} = 0, 1, 2$. In these experiments, CAG and CAPA used DirectLiNGAM as the causal structure learning algorithm for each variable group. 
In the following, we will refer to these procedures as CAG-LiNGAM and CAPA-LiNGAM, respectively. The accuracy of CAG-LiNGAM using true ancestral relationships was also computed for reference. 
As mentioned in Section \ref{sec:RCD}, RCD can also be applied to the model in the presence of latent confounders. This paper assumes that no latent confounder exists, so in this experiment for RCD, the procedure for determining latent confounders was omitted to estimate ancestral and parent-child relationships among variables.


\subsection{Experimental Settings}
The number of variables $p$ and the number of edges $|E|$ were set to 
$(p, |E|) = (10, 5)$, $(10, 9)$, $(20, 19)$, $(30, 29)$ and $(40, 39)$.
The sample sizes $n$ were set to $11, 25, 50, 100$ and $500$.

The error terms $e_i$ were generated from the uniform distribution $U(-1,1)$.
The true causal DAGs were randomly generated with $(p,|E|)$ fixed at each iteration. 
In these experiments, the true causal DAGs were assumed to be sparse, and the indegrees of all vertices were set to at most one.
The nonzero elements of the coefficient matrix $B$ were randomly generated from the uniform distribution $U([-1, -0.5] \cup [0.5, 1])$ at each iteration.
The number of iterations was set to 100. 

Let $|E_c|$ be the number of correctly detected edges in the 100 estimated DAGs. 
Denote the number of redundant and missing edges in the 100 estimated DAGs by $|E_{r}|$ and $|E_{m}|$, respectively.
Note that if an edge in an estimated DAG is in the wrong direction, both $|E_r|$ and $|E_m|$ will add one.
We evaluated the performance of CAG-LiNGAM and the other methods using the following indices 
(e.g., Zhu et al. \cite{Zhu2019}). 

\begin{enumerate}
    \item The precision 
        \[
        Pre:=\frac{|E_c|}{|E_c| + |E_r|}
        \]              
        is the ratio of correctly estimated edges among the edges in the estimated DAG; 
    \item The recall 
        \[
        Rec:=\frac{|E_c|}{|E_c| + |E_m|} = \frac{|E_c|}{|E|}
        \]        
        is the ratio of correctly estimated edges among the edges in the true DAG; 
    \item $F$-measure
        \[
        F:=\frac{2 \times Pre \times Rec}{Pre + Rec}
        \]
        is the harmonic mean of the precision and the recall; 
    \item $Time$ is the running time in seconds for estimating 100 DAGs.
\end{enumerate}

Let $\alpha_P$ be the significance level of Pearson's correlation test in CAG and RCD.
Let $\alpha_{CI}$ be the significance level of KCI in CAG, RCD, and CAPA.
$\alpha_P$ and $\alpha_{CI}$ are set to 
\begin{align*}
    \alpha_P &\in \{0.01, 0.05, 0.1, 0.5\},\\
    \alpha_{CI} & \in \{0.001, 0.01, 0.1, 0.2, 0.3, 0.4, 0.5\}, 
\end{align*}
respectively. 

Experiments for $p=40$ were conducted on a machine with a 3.0 GHz Intel Core i9 processor and 256 GB memory, and experiments for $p=30$ were performed on a machine with a 3.0 GHz Intel Core i9 processor and 128 GB memory. 
The other experiments were conducted on a machine with a 2.1 GHz Dual Xeon processor and 96 GB memory. 

\subsection{Experimental Results and Discussion}
\label{Experimental Results}
\begin{table*}
\centering

\caption{Performances of CAG, DirectLiNGAM, RCD and CAPA}
\label{tab:4}
\scalebox{0.7}{
\begin{tabular}{l|ccccccccc|ccccccccc} 
\hline
\multirow{3}{*}{($p$, $|E|$, $n$)} & \multicolumn{9}{c|}{$Pre$}                                                                                                                      & \multicolumn{9}{c}{$Rec$}                                                                                                             \\ 
\cline{2-19}
                                   & \multicolumn{3}{c}{CAG}                                            & $\mathrm{CAG}^*$ & DLi   & RCD            & \multicolumn{3}{c|}{CAPA}      & \multicolumn{3}{c}{CAG}                          & $\mathrm{CAG}^*$ & DLi             & RCD            & \multicolumn{3}{c}{CAPA}     \\ 
\cline{2-19}
                                   & W              & \multicolumn{1}{l}{Anc} & \multicolumn{1}{l}{abs} & Anc              &       & W              & 0     & 1     & 2              & W              & Anc            & abs            & Anc              &                 & W              & 0      & 1       & 2         \\ 
\hline
(10, 5, 11)                        & 0.210          & 0.221                   & 0.213                   & 0.507            & 0.122 & \textbf{0.268} & 0.209 & 0.166 & 0.193          & 0.434          & 0.396          & \textbf{0.446} & 0.448            & 0.374           & 0.128          & 0.197  & 0.209   & 0.194     \\
(10, 5, 25)                        & 0.550          & 0.543                   & 0.546                   & 0.720            & 0.311 & 0.602          & 0.493 & 0.597 & \textbf{0.612} & 0.576          & \textbf{0.578} & \textbf{0.578} & 0.736            & 0.576           & 0.396          & 0.483  & 0.420   & 0.408     \\
(10, 5, 50)                        & 0.773          & 0.770                   & 0.773                   & 0.850            & 0.521 & \textbf{0.851} & 0.725 & 0.735 & 0.779          & 0.764          & 0.764          & 0.764          & 0.928            & \textbf{0.868}  & 0.592          & 0.696  & 0.723   & 0.706     \\
(10, 5, 100)                       & 0.891          & 0.889                   & 0.891                   & 0.917            & 0.693 & \textbf{0.967} & 0.786 & 0.877 & 0.896          & 0.896          & 0.896          & 0.896          & 0.996            & \textbf{0.986}  & 0.826          & 0.886  & 0.860   & 0.840     \\
(10, 5, 500)                       & 0.971          & 0.971                   & 0.971                   & 0.963            & 0.813 & \textbf{1.000} & 0.861 & 0.901 & 0.906          & 0.998          & 0.998          & 0.998          & 1.000            & \textbf{1.000}  & 0.994          & 0.945  & 0.956   & 0.960     \\ 
\hline
(10, 9, 11)                        & 0.251          & 0.218                   & 0.244                   & 0.396            & 0.179 & \textbf{0.281} & 0.214 & 0.208 & 0.250          & 0.431          & \textbf{0.487} & 0.452          & 0.483            & 0.387           & 0.147          & 0.207  & 0.220   & 0.193     \\
(10, 9, 25)                        & 0.369          & 0.368                   & 0.350                   & 0.563            & 0.339 & \textbf{0.452} & 0.388 & 0.389 & 0.429          & 0.693          & 0.486          & \textbf{0.727} & 0.729            & 0.600           & 0.301          & 0.418  & 0.447   & 0.426     \\
(10, 9, 50)                        & 0.621          & 0.599                   & 0.609                   & 0.686            & 0.482 & \textbf{0.644} & 0.526 & 0.523 & 0.548          & 0.643          & 0.650          & 0.646          & 0.899            & \textbf{0.818}  & 0.521          & 0.634  & 0.651   & 0.643     \\
(10, 9, 100)                       & 0.779          & 0.768                   & 0.779                   & 0.834            & 0.685 & \textbf{0.914} & 0.680 & 0.686 & 0.682          & 0.851          & 0.860          & 0.859          & 0.986            & \textbf{0.974}  & 0.758          & 0.845  & 0.836   & 0.839     \\
(10, 9, 500)                       & 0.918          & 0.917                   & 0.918                   & 0.877            & 0.785 & \textbf{0.997} & 0.785 & 0.773 & 0.774          & 0.999          & 0.999          & 0.999          & 1.000            & \textbf{1.000}  & 0.999          & 0.929  & 0.930   & 0.926     \\ 
\hline
(20, 19, 25)                       & 0.313          & 0.319                   & 0.267                   & 0.435            & 0.172 & 0.327          & 0.235 & 0.410 & \textbf{0.468} & 0.601          & 0.423          & \textbf{0.642} & 0.665            & 0.496           & 0.284          & 0.287  & 0.220   & 0.238     \\
(20, 19, 50)                       & 0.528          & 0.484                   & 0.497                   & 0.625            & 0.354 & 0.549          & 0.447 & 0.454 & \textbf{0.616} & 0.662          & 0.640          & 0.676          & 0.876            & \textbf{0.721}  & 0.514          & 0.561  & 0.527   & 0.448     \\
(20, 19, 100)                      & 0.759          & 0.723                   & 0.751                   & 0.780            & 0.576 & \textbf{0.837} & 0.610 & 0.594 & 0.681          & 0.776          & 0.780          & 0.778          & 0.984            & \textbf{0.943}  & 0.729          & 0.779  & 0.776   & 0.685     \\
(20, 19, 500)                      & 0.897          & 0.894                   & 0.897                   & 0.844            & 0.725 & \textbf{0.994} & 0.725 & 0.718 & 0.712          & 0.997          & 0.997          & 0.997          & 1.000            & \textbf{1.000}  & 0.996          & 0.899  & 0.895   & 0.892     \\ 
\hline
(30, 29, 50)                       & 0.535          & 0.445                   & 0.489                   & 0.655            & 0.276 & 0.475          & 0.374 & 0.527 & \textbf{0.631} & 0.618          & 0.616          & \textbf{0.633} & 0.855            & 0.588           & 0.479          & 0.438  & 0.390   & 0.435     \\
(30, 29, 100)                      & 0.725          & 0.701                   & 0.673                   & 0.878            & 0.587 & \textbf{0.813} & 0.649 & 0.649 & 0.676          & 0.725          & 0.701          & \textbf{0.853} & 0.878            & 0.587           & 0.813          & 0.649  & 0.649   & 0.676     \\
(30, 29, 500)                      & 0.969          & 0.967                   & 0.970                   & 0.976            & 0.919 & \textbf{0.998} & 0.919 & 0.916 & 0.914          & 0.998          & 0.998          & 0.998          & 1.000            & \textbf{1.000}  & 0.996          & 0.970  & 0.968   & 0.969     \\ 
\hline
(40, 39, 50)                       & 0.464          & 0.423                   & 0.466                   & 0.625            & 0.211 & 0.459          & 0.311 & 0.480 & \textbf{0.712} & \textbf{0.800} & 0.595          & 0.624          & 0.839            & 0.536           & 0.469          & 0.381  & 0.364   & 0.374     \\
(40, 39, 100)                      & 0.696          & 0.640                   & 0.652                   & 0.863            & 0.477 & \textbf{0.792} & 0.573 & 0.576 & 0.679          & 0.835          & 0.741          & \textbf{0.849} & 0.973            & 0.809           & 0.686          & 0.690  & 0.680   & 0.635     \\
(40, 39, 500)                      & 0.952          & 0.944                   & 0.951                   & 0.970            & 0.917 & \textbf{0.991} & 0.917 & 0.914 & 0.915          & 0.998          & 0.998          & 0.998          & 1.000            & \textbf{1.000}  & 0.995          & 0.966  & 0.966   & 0.966     \\ 
\hline
\multirow{3}{*}{($p$, $|E|$, $n$)} & \multicolumn{9}{c|}{$F$}                                                                                                                        & \multicolumn{9}{c}{$Time(s)$}                                                                                                         \\ 
\cline{2-19}
                                   & \multicolumn{3}{c}{CAG}                                            & $\mathrm{CAG}^*$ & DLi   & RCD            & \multicolumn{3}{c|}{CAPA}      & \multicolumn{3}{c}{CAG}                          & $\mathrm{CAG}^*$ & DLi             & RCD            & \multicolumn{3}{c}{CAPA}     \\ 
\cline{2-19}
                                   & W              & \multicolumn{1}{l}{Anc} & \multicolumn{1}{l}{abs} & Anc              &       & W              & 0     & 1     & 2              & W              & Anc            & abs            & Anc              &                 & W              & 0      & 1       & 2         \\ 
\hline
(10, 5, 11)                        & 0.283          & 0.284                   & \textbf{0.288}          & 0.476            & 0.184 & 0.173          & 0.203 & 0.185 & 0.194          & 23.0           & 17.8           & 21.3           & 5.0              & 29.0            & \textbf{6.7}   & 28.2   & 99.5    & 210.7     \\
(10, 5, 25)                        & \textbf{0.563} & 0.560                   & 0.562                   & 0.728            & 0.404 & 0.478          & 0.488 & 0.493 & 0.489          & 13.5           & 13.4           & 13.4           & 5.0              & 27.5            & \textbf{8.0}   & 32.2   & 55.0    & 57.1      \\
(10, 5, 50)                        & \textbf{0.769} & 0.767                   & \textbf{0.769}          & 0.887            & 0.651 & 0.698          & 0.710 & 0.729 & 0.741          & 13.0           & 13.0           & 13.0           & 5.1              & 72.1            & \textbf{8.1}   & 32.8   & 75.1    & 84.8      \\
(10, 5, 100)                       & \textbf{0.893} & 0.892                   & \textbf{0.893}          & 0.955            & 0.814 & 0.891          & 0.833 & 0.868 & 0.867          & 154.6          & 154.4          & 154.5          & 48.9             & 123.6           & \textbf{123.4} & 165.2  & 371.7   & 502.2     \\
(10, 5, 500)                       & 0.984          & 0.984                   & 0.984                   & 0.981            & 0.897 & \textbf{0.997} & 0.901 & 0.928 & 0.932          & 330.8          & 330.7          & 330.8          & 65.1             & \textbf{151.8}  & 480.5          & 459.5  & 1867.9  & 2529.8    \\ 
\hline
(10, 9, 11)                        & \textbf{0.317} & 0.301                   & \textbf{0.317}          & 0.435            & 0.244 & 0.193          & 0.210 & 0.214 & 0.218          & 39.4           & 23.4           & 23.5           & 12.3             & 28.4            & \textbf{10.2}  & 39.0   & 112.3   & 250.8     \\
(10, 9, 25)                        & \textbf{0.482} & 0.419                   & 0.472                   & 0.635            & 0.433 & 0.361          & 0.403 & 0.416 & 0.428          & 34.1           & 36.5           & 25.0           & 11.3             & 28.9            & \textbf{15.2}  & 44.2   & 110.8   & 291.0     \\
(10, 9, 50)                        & \textbf{0.632} & 0.623                   & 0.627                   & 0.778            & 0.607 & 0.576          & 0.575 & 0.580 & 0.591          & 22.8           & 21.7           & 21.7           & 12.3             & 71.8            & \textbf{18.6}  & 80.3   & 187.3   & 370.5     \\
(10, 9, 100)                       & 0.814          & 0.811                   & 0.817                   & 0.904            & 0.805 & \textbf{0.829} & 0.753 & 0.753 & 0.752          & 338.9          & 334.6          & 334.6          & 117.1            & \textbf{120.7}  & 312.3          & 255.9  & 1420.2  & 6078.1    \\
(10, 9, 500)                       & 0.957          & 0.956                   & 0.957                   & 0.935            & 0.879 & \textbf{0.998} & 0.851 & 0.844 & 0.843          & 1062.1         & 1061.6         & 1061.6         & 154.4            & \textbf{155.5}  & 1332.3         & 425.2  & 6833.6  & 19649.6   \\ 
\hline
(20, 19, 25)                       & \textbf{0.411} & 0.364                   & 0.378                   & 0.526            & 0.255 & 0.304          & 0.259 & 0.286 & 0.316          & 520.0          & 47.2           & 101.7          & 29.2             & 172.3           & \textbf{34.4}  & 524.3  & 713.6   & 1044.2    \\
(20, 19, 50)                       & \textbf{0.588} & 0.551                   & 0.573                   & 0.729            & 0.474 & 0.531          & 0.498 & 0.488 & 0.518          & 93.2           & 57.0           & 72.2           & 29.3             & 323.9           & \textbf{48.6}  & 573.4  & 926.4   & 1658.1    \\
(20, 19, 100)                      & 0.768          & 0.750                   & 0.764                   & 0.870            & 0.715 & \textbf{0.779} & 0.684 & 0.673 & 0.683          & 1111.4         & 1015.5         & 1015.7         & 337.2            & \textbf{417.3}  & 991.4          & 1418.6 & 10457.6 & 39708.9   \\
(20, 19, 500)                      & 0.944          & 0.943                   & 0.944                   & 0.916            & 0.840 & \textbf{0.995} & 0.802 & 0.797 & 0.792          & 3918.8         & 3906.2         & 3906.3         & 421.7            & \textbf{542.6}  & 4757.2         & 1471.6 & 33655.4 & 166854.1  \\ 
\hline
(30, 29, 50)                       & \textbf{0.573} & 0.516                   & 0.552                   & 0.742            & 0.376 & 0.477          & 0.403 & 0.449 & 0.515          & 196.6          & 92.7           & 129.2          & 40.2             & 554.7           & \textbf{65.8}  & 1077.6 & 1962.0  & 5290.4    \\
(30, 29, 100)                      & \textbf{0.782} & 0.726                   & 0.753                   & 0.925            & 0.703 & 0.747          & 0.704 & 0.703 & 0.709          & 1381.5         & 1138.7         & 1215.6         & 397.6            & \textbf{604.9}  & 1287.1         & 1090.2 & 13253.2 & 117849.3  \\
(30, 29, 500)                      & 0.983          & 0.982                   & 0.984                   & 0.988            & 0.958 & \textbf{0.997} & 0.943 & 0.941 & 0.941          & 5732.6         & 5731.6         & 5731.7         & 444.0            & \textbf{1449.6} & 9536.9         & 3540.0 & 45572.6 & 341942.2  \\ 
\hline
(40, 39, 50)                       & \textbf{0.588} & 0.494                   & 0.533                   & 0.716            & 0.302 & 0.464          & 0.343 & 0.414 & 0.490          & 4233.4         & 192.9          & 258.3          & 56.9             & 962.4           & \textbf{153.3} & 1920.0 & 4387.8  & 6462.0    \\
(40, 39, 100)                      & \textbf{0.759} & 0.687                   & 0.737                   & 0.914            & 0.600 & 0.735          & 0.626 & 0.624 & 0.656          & 2931.3         & 2527.1         & 2620.8         & 604.1            & \textbf{1001.2} & 2725.9         & 1740.4 & 35769.3 & 352156.9  \\
(40, 39, 500)                      & 0.975          & 0.970                   & 0.974                   & 0.985            & 0.956 & \textbf{0.993} & 0.941 & 0.939 & 0.940          & 10234.1        & 10225.4        & 10225.7        & 706.8            & \textbf{2610.9} & 16099.2        & 6221.7 & 94989.2 & 779946.2  \\
\hline
\end{tabular}
}
\end{table*}

Table \ref{tab:4} shows $Pre$, $Rec$, $F$ and $Time$ for each method for some $(p,|E|,n)$. 
The experiments for CAG-LiNGAM and RCD were conducted with $4 \times 7$ combinations of significance levels of Pearson's correlation test and KCI for each $(p,|E|,n)$. 
The experiments for CAPA-LiNGAM were conducted with $7$ significance levels of KCI for each
$(p,|E|,n)$. 
Table \ref{tab:4} shows only the results of these methods for the significance level that maximizes $F$-measure.

In Table \ref{tab:4}, CAG represents CAG-LiNGAM, while $\text{CAG}^*$ refers to CAG-LiNGAM using true ancestral relationships. 
DLi stands for the original DirectLiNGAM without grouping variables. 
W, Anc, and abs under CAG and RCD represent cycle elimination methods based on the Wald test, the estimated ancestral relationship, and the absolute value of the coefficient, respectively.
The numbers 0, 1, and 2 under CAPA are the $\sigma_{\max}$ values. 
In the table, the bold numbers highlight the best performance in each experimental group $(p, |E|, n)$, excluding $\text{CAG}^*$. 


As seen from Table \ref{tab:4}, CAG-LiNGAM outperforms the original DirectLiNGAM without the variable grouping, RCD, and CAPA-LiNGAM when the sample size is small relative to the number of variables in terms of $F$-measure and the recall.
Although CAG requires $O(p^3)$ time complexity to group variables, CAG-LiNGAM often shows shorter computation time than the original DirectLiNGAM when the sample size is small. 
When the sample size is large, CAG-LiNGAM takes longer computation time than the original DirectLiNGAM, but CAG-LiNGAM is superior to the original DirectLiNGAM in terms of precision and $F$-measure.

CAG with Wald tests takes longer computation time but is more accurate in terms of $F$-measure than CAG with Anc and abs. As the sample size increases, the accuracy of estimating ancestral relationships and DirectLiNGAM increases, and hence, the frequency of cycles appearing in the estimated DAG decreases, so the difference in estimation accuracy between the three CAG variants becomes smaller.

When the sample size is small, CAPA-LiNGAM may show higher precision than CAG-LiNGAM. However, the recall and $F$-measure are lower in many cases.
This may be because when the sample size is small, it is difficult for CAPA to detect conditional independence relationships due to the type II error of the CI tests. Then, $G_\sigma$ becomes overly sparse, resulting in overly sparse causal DAG estimates. 

When $\sigma_{\max}$ is $0$ and $1$, the time complexity of the variable grouping of CAPA is less than or equal to that of CAG. 
However, the computation time for CAPA-LiNGAM is not much different from that for CAG-LiNGAM, or CAG-LiNGAM is faster than CAPA-LiNGAM. Besides, CAG-LiNGAM outperforms CAPA-LiNGAM in terms of $F$-measure.

It is noteworthy that RCD shows higher $F$-measure than CAG-LiNGAM when the sample size is large, although it takes more computation time. 
Conversely, when the sample size is small, CAG-LiNGAM shows higher $F$-measure than RCD, although it takes more computation time. 

The original DirectLiNGAM shows lower precision than divide-and-conquer algorithms. 
This may be because the original DirectLiNGAM does not group variables, leading to redundant edges across different variable groups. 
In general, DAGs estimated by DirectLiNGAM tend to have more redundant edges.

Even if the true causal DAG is sparse, the subgraph induced by each variable group becomes relatively denser. 
In the case of directed trees, the ratio of the number of edges to the number of variable pairs is 
$(p-1)/{}_pC_2=2/p$. 
This means that the smaller the number of variables in each group is, the denser the sub-DAG induced by each group is.
When variables are correctly grouped into smaller subsets, the true sub-DAGs induced by each group become denser, and there are no edges across groups. Therefore, if ancestral relationships can be accurately estimated, applying DirectLiNGAM to each group is expected to avoid more redundant edges than using it for the entire variable. 

The CAG uses the estimated ancestral relationships for grouping variables but does not use them to estimate the causal DAG for each group.
However, since ancestral relationships partially define the structure of the causal DAG, information on ancestral relationships could be used to estimate causal DAGs.
The RCD can be interpreted as using information on ancestral relationships to estimate causal DAGs.
\begin{figure}[!ht]
    \centering
    \scalebox{0.5}{
        \includegraphics{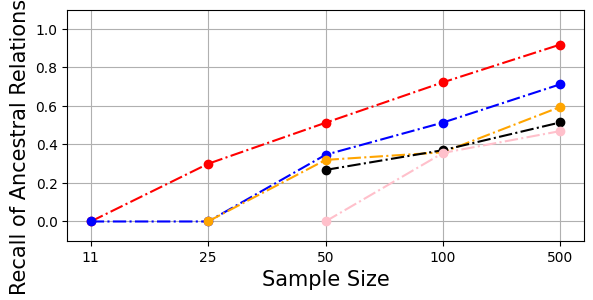}
    }
    \caption{$Anc_{rec}$ with Wald tests for some sample sizes}
    {\footnotesize The lines in $\lbrace$red, blue, orange, black, pink$\rbrace$ represent
    $(p, |E|) = (10, 5), (10, 9), (20, 19), (30. 29), (40, 39)$, respectively.}
    \label{ANC}
\end{figure}
Let $A_c$ and $A_m$ be the number of correctly detected and missing ancestors in the estimated ancestor lists
$\widehat{Anc}_i$, $i=1,\ldots,p$, respectively. 
Define $Anc_{rec}$ by the recall of the list of ancestors, 
\[
Anc_{rec} = \frac{A_c}{A_c+A_m}. 
\]
Figure \ref{ANC} plots $Anc_{rec}$ of the ancestor list for some $(p,|E|,n)$ when using the significance levels of the CI tests that maximize the $F$-measure of the CAG-LiNGAM with Wald tests. 
This figure shows that when the sample size is small, the accuracy of the estimation of ancestor lists is quite low. 
When $(p,|E|,n)=(10,5,11), (10,9,11), (10,9,25), (20,19,25), (40, 39, 50)$, $Anc_{rec}$ are exactly zeros. 
However, Table \ref{tab:4} shows that the accuracy of CAG-LiNGAM is not that bad.
This fact suggests that applying DirectLiNGAM to each group split by the CAG may correct errors in the estimated ancestral relationships within each group.

Table \ref{tab:6} shows $Anc_{rec}$ computed from the ancestral relationships estimated by CAG and $Anc_{rec}$ computed from the estimated DAG in the experiments. 
We can see that $Anc_{rec}$ is significantly improved by applying DirectLiNGAM to each group. 

\begin{table}[!ht]
\centering
\caption{$Anc_{rec}$ before and after applying DirectLiNGAM}
\label{tab:6}
\scalebox{1}{
\begin{tabular}{l|ccc|ccc} \hline
              & \multicolumn{3}{c|}{Before} & \multicolumn{3}{c}{After}  \\ 
\cline{2-7}
$(p, |E|, n)$   &   W     & Anc   & abs    & W     & Anc   & abs        \\ 
\hline
(10, 5, 11)   & 0     & 0.049 & 0     & 0.438 & 0.405 & 0.466      \\
(10, 5, 25)   & 0.300 & 0.300 & 0.300 & 0.519 & 0.525 & 0.524      \\
(10, 5, 50)   & 0.513 & 0.513 & 0.513 & 0.677 & 0.679 & 0.679      \\
(10, 5, 100)  & 0.723 & 0.723 & 0.723 & 0.852 & 0.852 & 0.852      \\
(10, 5, 500)  & 0.919 & 0.919 & 0.919 & 0.998 & 0.998 & 0.998      \\ 
\hline
(10, 9, 11)   & 0     & 0     & 0     & 0.404 & 0.568 & 0.447      \\
(10, 9, 25)   & 0     & 0.323 & 0     & 0.648 & 0.483 & 0.701      \\
(10, 9, 50)   & 0.346 & 0.346 & 0.346 & 0.540 & 0.557 & 0.547      \\
(10, 9, 100)  & 0.514 & 0.514 & 0.514 & 0.791 & 0.817 & 0.805      \\
(10, 9, 500)  & 0.713 & 0.713 & 0.713 & 0.999 & 0.999 & 0.999      \\ 
\hline
(20, 19, 25)  & 0.001 & 0.177 & 0.001 & 0.514 & 0.366 & 0.577      \\
(20, 19, 50)  & 0.320 & 0.289 & 0.320 & 0.545 & 0.558 & 0.594      \\
(20, 19, 100) & 0.359 & 0.359 & 0.359 & 0.638 & 0.669 & 0.651      \\
(20, 19, 500) & 0.596 & 0.596 & 0.596 & 0.993 & 0.998 & 0.998      \\ 
\hline
(30, 29, 50)  & 0.268 & 0.241 & 0.268 & 0.450 & 0.562 & 0.515      \\
(30, 29, 100) & 0.371 & 0.291 & 0.371 & 0.745 & 0.636 & 0.782      \\
(30, 29, 500) & 0.515 & 0.515 & 0.515 & 0.993 & 0.997 & 0.993      \\ 
\hline
(40, 39, 50)  & 0.001 & 0.222 & 0.225 & 0.627 & 0.546 & 0.508      \\
(40, 39, 100) & 0.356 & 0.276 & 0.356 & 0.697 & 0.624 & 0.785      \\
(40, 39, 500) & 0.470 & 0.470 & 0.470 & 0.997 & 0.994 & 0.997     \\ \hline
\end{tabular}
}
\end{table}

Table \ref{tab:Anc} summarizes the following six indices calculated from CAG's experimental results 
when $p=10$. 
\begin{itemize}
    \item Improvement ($Imp$): the number of ancestral relationships in the output DAGs that DirectLiNGAM corrects
    \item Worsening ($Wor$): the number of ancestral relationships in the output DAGs that DirectLiNGAM worsens
    \item Errors in estimation ($Err$): the number of ancestral relationships that are wrongly estimated in CAG
    \item Correct estimation ($Corr$): the number of ancestral relationships that are correctly estimated in CAG
    \item $Imp/Err$:  the ratio of ancestral relationships incorrectly estimated by CAG that DirectLiNGAM corrected
    \item $Wor/Corr$: the ratio of ancestral relationships correctly estimated by CAG that DirectLiNGAM worsened
\end{itemize}

\begin{table}
\centering
\caption{The Improvement and Worsening of Ancestral Relationships after applying DirectLiNGAM.}
\label{tab:Anc}
\scalebox{0.75}{
\begin{tabular}{l|c|cccc|cc} \hline
$(p, |E|, n)$                   & Elimination & $Imp$ & $Wor$ & $Corr$ & $Err$ & $Imp/Err$ & $Wor/Corr$              \\ 
\hline
\multirow{3}{*}{(10,5,11)}  & Wald        & 265 & 1029 & 3858 & 642  & \textbf{0.4128} & 0.2667           \\
                            & Anc         & 228 & 858  & 3858 & 642  & \textbf{0.3551}  & 0.2224           \\
                            & abs       & 286 & 1042 & 3858 & 642  & \textbf{0.4455} & 0.2701           \\ 
\hline
\multirow{3}{*}{(10,5,25)}  & Wald        & 239 & 700  & 3956 & 544  & \textbf{0.4393} & 0.1769           \\
                            & Anc         & 227 & 636  & 3956 & 544  & \textbf{0.4173} & 0.1608           \\
                            & abs       & 238 & 720  & 3956 & 544  & \textbf{0.4375}   & 0.1820           \\ 
\hline
\multirow{3}{*}{(10,5,50)}  & Wald        & 204 & 268  & 4154 & 346  & \textbf{0.5896} & 0.0645           \\
                            & Anc         & 191 & 253  & 4154 & 346  & \textbf{0.5520} & 0.0609           \\
                            & abs       & 211 & 280  & 4154 & 346  & \textbf{0.6098} & 0.0674           \\ 
\hline
\multirow{3}{*}{(10,9,11)}  & Wald        & 794 & 1226 & 2627 & 1873 & 0.4239          & \textbf{0.4667}  \\
                            & Anc         & 374 & 559  & 2627 & 1873 & 0.1997           & \textbf{0.2128}   \\
                            & abs       & 794 & 1226 & 2627 & 1873 & 0.4239          & \textbf{0.4667}  \\ 
\hline
\multirow{3}{*}{(10,9,25)}  & Wald        & 657 & 678  & 2780 & 1720 & \textbf{0.3820} & 0.2439           \\
                            & Anc         & 519 & 565  & 2780 & 1720 & \textbf{0.3017} & 0.2032           \\
                            & abs       & 747 & 804  & 2780 & 1720 & \textbf{0.4343} & 0.2892           \\ 
\hline
\multirow{3}{*}{(10,9,50)}  & Wald        & 704 & 373  & 3078 & 1422 & \textbf{0.4951} & 0.1212           \\
                            & Anc         & 570 & 273  & 3078 & 1422 & \textbf{0.4008} & 0.0887           \\
                            & abs       & 728 & 412  & 3078 & 1422 & \textbf{0.5120} & 0.1339           \\ 
\hline
\multirow{3}{*}{(10,9,100)} & Wald        & 682 & 130  & 3479 & 1021 & \textbf{0.6680} & 0.0374           \\
                            & Anc         & 586 & 122  & 3479 & 1021 & \textbf{0.5739} & 0.0351           \\
                            & abs       & 703 & 118  & 3479 & 1021 & \textbf{0.6885} & 0.0339           \\ 
\hline
\multirow{3}{*}{(10,9,500)} & Wald        & 626 & 32   & 3730 & 770  & \textbf{0.8130} & 0.0086           \\
                            & Anc         & 626 & 33   & 3730 & 770  & \textbf{0.8078} & 0.0088           \\
                            & abs       & 626 & 32   & 3730 & 770  & \textbf{0.8104}  & 0.0086     \\ \hline     
\end{tabular}
}
\end{table}

Table \ref{tab:Anc} shows that when the sample size is small, $Imp < Wor$, but $Imp/Err > Wor/Corr$. By applying DirectLiNGAM, the number of corrected ancestral relationships is less than that of worsened ones. 
However, the proportion of corrected ancestral relationships is relatively larger than that of worsened ancestral relationships, which may have improved the accuracy of causal DAG estimation.

Even with large sample sizes, $Anc_{rec}$ improves by applying DirectLiNGAM. However, Table \ref{tab:4} shows that the accuracy of CAG is inferior to that of RCD. When the sample size is large, DirectLiNGAM tends to include redundant edges in the estimated DAG, which may make CAG less accurate than RCD.

This experiment set 28 different significance levels for the CAG's CI tests. 
Table \ref{tab:alpha} presents the significance levels that maximize the $F$-measure of CAG-LiNGAM.
A small significance level is often chosen when the sample size is small. In practice, however, the results are almost the same regardless of the significance level chosen.
When the sample size is small, ancestral relationships can hardly be detected regardless of the significance level. In this experiment of CAG-LiNGAM with Wald tests, when $(p,|E|,n)=(10,5,11), (10,9,11),(10,9,25)$, no ancestral relationship could be detected within 100 repetitions. From Definition \ref{def:grouping} in Section \ref{sec:ancestor}, if no ancestral relationship was detected, the variable group is the family of all $_pC_2$ variable pairs. Interestingly, even in such cases, CAG-LiNGAM shows higher $F$-measure than the other methods.
When the sample size is moderately small, the $F$-measure values are large when the significance level of Pearson's correlation test is 0.01 and the significance level of KCI is about 0.2 to 0.5.
When the sample size is large, the smaller the significance level, the higher $F$-measure.

\begin{table}[H]
\centering
\caption{The significance levels used in CAG-LiNGAM in Table \ref{tab:4}}
\label{tab:alpha}
\scalebox{1}{
\begin{tabular}{l|cc|cc|cc} \hline
              & \multicolumn{2}{c|}{W} & \multicolumn{2}{c|}{Anc} & \multicolumn{2}{c}{abs}  \\ 
\cline{2-7}
($p$, $|E|$, $n$) & $\alpha_P$ & $\alpha_{CI}$ & $\alpha_{P}$    & $\alpha_{CI}$                 & $\alpha_{P}$     & $\alpha_{CI}$                  \\ 
\hline
(10, 5, 11)   & 0.01 & 0.001           & 0.01 & 0.3               & 0.01 & 0.001             \\
(10, 5, 25)   & 0.01 & 0.5             & 0.01 & 0.5               & 0.01 & 0.5               \\
(10, 5, 50)   & 0.01 & 0.4             & 0.01 & 0.4               & 0.01 & 0.4               \\
(10, 5, 100)  & 0.01 & 0.3             & 0.01 & 0.3               & 0.01 & 0.3               \\
(10, 5, 500)  & 0.01 & 0.001           & 0.01 & 0.001             & 0.01 & 0.001             \\ 
\hline
(10, 9, 11)   & 0.05 & 0.01            & 0.05 & 0.01              & 0.05 & 0.01              \\
(10, 9, 25)   & 0.01 & 0.001           & 0.5  & 0.5               & 0.01 & 0.001             \\
(10, 9, 50)   & 0.01 & 0.3             & 0.01 & 0.3               & 0.01 & 0.3               \\
(10, 9, 100)  & 0.01 & 0.2             & 0.01 & 0.2               & 0.01 & 0.2               \\
(10, 9, 500)  & 0.01 & 0.001           & 0.01 & 0.001             & 0.01 & 0.001             \\ 
\hline
(20, 19, 25)  & 0.05 & 0.001           & 0.01 & 0.5               & 0.05 & 0.001             \\
(20, 19, 50)  & 0.01 & 0.4             & 0.01 & 0.3               & 0.01 & 0.4               \\
(20, 19, 100) & 0.01 & 0.1             & 0.01 & 0.1               & 0.01 & 0.1               \\
(20, 19, 500) & 0.01 & 0.001           & 0.01 & 0.001             & 0.01 & 0.001             \\ 
\hline
(30, 29, 50)  & 0.01 & 0.4             & 0.01 & 0.3               & 0.01 & 0.4               \\
(30, 29, 100) & 0.01 & 0.2             & 0.01 & 0.1               & 0.01 & 0.2               \\
(30, 29, 500) & 0.01 & 0.001           & 0.01 & 0.001             & 0.01 & 0.001             \\ 
\hline
(40, 39, 50)  & 0.01 & 0.001           & 0.01 & 0.3               & 0.01 & 0.4               \\
(40, 39, 100) & 0.01 & 0.2             & 0.01 & 0.1               & 0.01 & 0.1               \\
(40, 39, 500) & 0.01 & 0.001           & 0.01 & 0.001             & 0.01 & 0.001            \\ \hline
\end{tabular}
}
\end{table}

As mentioned, $\mathrm{CAG}^*$ is the CAG-LiNGAM when the true ancestral relationships are known. When the sample size is small, the estimation accuracy of CAG-LiNGAM is inferior to that of $\mathrm{CAG}^*$ because CAG is less accurate. However, when $p=10$ and the sample size is large, CAG-LiNGAM shows higher precision and $F$-measure than $\mathrm{CAG}^*$. As seen from Figure \ref{ANC} and Table \ref{tab:6}, the accuracy of ancestral relationship estimation improves as the sample size increases. However, even when $(p,|E|,n)=(10,9,500)$, $Anc_{rec}$ is 0.713, which is not close enough to 1. 
From Table \ref{tab:alpha}, when $(p,|E|,n)=(10,9,500)$, $\alpha_P=0.01$ and $\alpha_{CI}=0.001$ are used for the significance levels of the CI tests. Even with a large sample size, if small significance levels are used in the CAG, some ancestral relationships cannot be detected due to type II errors in the CI test. As a result, the number of variables in the estimated groups may be smaller than those in the true CAG grouping, resulting in fewer redundant edges than $\mathrm{CAG}^*$. 
Grouping by CAG is not the finest grouping that guarantees the identifiability of true causal DAGs. The results suggest that using finer groupings may improve the accuracy of causal DAG estimation when the sample size is large.

\section{Conclusion}
\label{sec:5}
This paper proposes CAG as a divide-and-conquer algorithm for learning causal DAGs. This algorithm can help improve the performance of the original DirectLiNGAM, especially when the sample size is small relative to the number of variables. 
CAG is based on the algorithm for finding ancestral relationships among variables in RCD. 
CAG-LiNGAM guarantees the identifiability of true causal DAGs. 
Detailed computer experiments confirm that CAG-LiNGAM may outperform the original DirectLiNGAM, RCD, and CAPA-LiNGAM 
in terms of $F$-measure when the true causal DAG is sparse, and the sample size is small relative to the number of variables. 

If the true causal DAG is connected and the correct CI relationships are estimated, CAG creates the same number of variable groups as sink nodes in the causal DAG. 
Therefore, when the number of sink nodes in the true causal graph is small, 
the number of variables in each group cannot be sufficiently small, 
and thus, improvement in accuracy may not be expected. 
Other computer experiments have shown that even if the true causal DAG is sparse, the estimation accuracy is not necessarily high when the indegrees of some vertices are larger than one. 
CAPA is the opposite of CAG, where the number of variable groups is small when the maximum indegree of the true causal DAG is small, while it may be possible to group variables into many variable groups even when the maximum indegree of the true causal DAG is large. 
As noted at the end of the previous section, the divide-and-conquer algorithm is expected to be more accurate in estimating causal DAGs if finer groupings are used.
The hybrid algorithm of CAG and CAPA may give fine groupings even when the indegree of the true causal DAG is larger. 

We also found that the RCD has higher estimation accuracy than CAG when the sample size is large. RCD estimates causal DAGs using the estimated ancestral relationships among variables. As the sample size increases, the estimation accuracy of ancestral relationships improves. Therefore, using the estimated ancestral relationships to estimate causal DAGs may improve estimation accuracy. However, RCD is computationally expensive when the sample size is large. Using the information on the estimated ancestral relationships in DirectLiNGAM may improve the estimation accuracy of causal DAGs at a relatively low computational cost, even when the sample size is large.

\section*{Acknowledgment}
This work was supported by JSPS KAKENHI Grant Number JP21K11797.

\bibliographystyle{ieeetr}
\bibliography{main.bib}

\section*{Appendix}

\subsection{Pseudo-code of ancestor finding}
\label{sec:Ancestor Finding Code}
This section presents the pseudocode of the ancestor-finding algorithm. 
Lines 5-6 are the procedure that avoids cyclic ancestral relationships and speeds up the ancestor finding. 
$Pearson(x_i, x_j)$ represents the P-value of the Pearson test for the correlation between $x_i$ and $x_j$. 
$KCI(u,v)$ represents the P-value of KCI for the independence between $u$ and $v$. 

We can declare that all ancestral relationships have been identified if no new ancestral relationships are discovered in a single loop because of the no confounder existing assumption.
This means, unlike the original RCD \cite{Maeda2020, Maeda2022}, even if some pairs of variables halt at a point where additional searching for unknown common ancestors is required in Line 35, they will ultimately be treated as having no ancestral relationship.

\begin{breakablealgorithm}
	\renewcommand{\algorithmicrequire}{\textbf{Input:}}
	\renewcommand{\algorithmicensure}{\textbf{Output:}} 
	\caption{Ancestor Finding}
	\label{alg1}
	\begin{algorithmic}[1]
        \REQUIRE The observed data of the variable set $X$
		\ENSURE The list $ANC_{L}$ of the set of ancestors of all variables
        \STATE {Set the significance level of Pearson tests as $\alpha_P$ 
        and the significance level of KCI as $\alpha_{CI}$}
        \STATE {$Anc_i \gets \phi$ and $i = 1, \ldots, p$ \\
        $ANC_L \gets \lbrace Anc_1,\ldots,Anc_p\rbrace$}
        \WHILE{There is no new change in $ANC_L$}
		  \FOR{each pair of variables $x_i$ and $x_j$}
                \IF{$\exists x_k \in Anc_i, x_j \in Anc_k$}
                    \STATE {Add $x_j$ into $Anc_i$}
                \ELSE
                    \IF{$Anc_i \cap Anc_j \neq \phi$}
                        \STATE {$CA \gets Anc_i \cap Anc_j$}
                        \STATE {$v_i \gets x_i - \hat{\bm{\alpha}}_{ij}^\top CA,\quad v_j \gets x_j - \hat{\bm{\alpha}}_{ji}^\top CA$}
                        \STATE{$pvalue_{l} \gets Pearson(v_i, v_j)$ }
                        \IF {$pvalue_{l} \leq \alpha_P$}
                            \STATE {$u_i \gets v_i - \hat{b}_{ij} v_j,\quad u_j \gets v_j - \hat{b}_{ji} v_i$}
                            \STATE{$pvalue^i_j \gets KCI(u_j, v_i)$\\
                            $pvalue^j_i \gets KCI(u_i, v_j)$}
                            \IF{$pvalue^j_i > \alpha_{CI}$ and $pvalue^i_j \leq \alpha_{CI}$}
                                \STATE{Add $x_i$ into $Anc_j$}
                            \ELSIF{$pvalue^j_i \leq \alpha_{CI}$ and $pvalue^i_j > \alpha_{CI}$}
                                 \STATE{Add $x_j$ into $Anc_i$}   
                            \ELSE
                                \STATE {There are unobserved common ancestors between $x_i$ and $x_j$}
                            \ENDIF
                        \ELSE
                            \STATE{No ancestral relationship between $x_i$ and $x_j$}
                        \ENDIF 
                    \ELSE 
                        \STATE{$pvalue_{l} \gets Pearson(x_i, x_j)$}
                        \IF {$pvalue_{l} \leq \alpha_P$}
                            \STATE {$u_i \gets x_i - \hat{b}_{ij} x_j,\quad u_j \gets x_j - \hat{b}_{ji} x_i$}
                            \STATE{$pvalue^i_j \gets KCI(u_j, x_i)$\\
                            $pvalue^j_i \gets KCI(u_i, x_j)$}
                            \IF{$pvalue^j_i > \alpha_{CI}$ and $pvalue^i_j \leq \alpha_{CI}$}
                                \STATE{Add $x_i$ into $Anc_j$}
                            \ELSIF{$pvalue^j_i \le \alpha_{CI}$ and $pvalue^i_j > \alpha_{CI}$}
                                \STATE {Add $x_j$ into $Anc_i$}
                            \ELSE
                                \STATE {There are unobserved common ancestors between $x_i$ and $x_j$}
                            \ENDIF
                        \ELSE
                            \STATE{No ancestral relationship between $x_i$ and $x_j$}
                        \ENDIF
                    \ENDIF
                \ENDIF
		  \ENDFOR
		\ENDWHILE
		\RETURN {$ANC_L$}
	\end{algorithmic}  
\end{breakablealgorithm}

\subsection{Pseudo-code of Grouping and Merging}
This section presents the pseudocode for grouping variables and merging the results. 
In the implementation, a type II error of CI tests can result in many groups with only one variable.
In that case, the output DAG will be too sparse. 
To avoid this, the procedure in lines 3 through 9 merges groups with one element into the other groups. When the sample size is small, no ancestral relationship may be detected, and the number of elements in all groups is one. In such a case, all ${}_pC_2$ variable pairs are defined as the variable grouping.

After grouping, line 13 shows the process of learning causal sub-DAGs and merging them.
Finally, lines 15-17 eliminate cycles in the merged DAG. 

\begin{breakablealgorithm} 
	\renewcommand{\algorithmicrequire}{\textbf{Input:}}
	\renewcommand{\algorithmicensure}{\textbf{Output:}}
    \caption{Grouping and Merging}
	\label{alg2}
	\begin{algorithmic}[1]
	    \REQUIRE  The set $ANC_{L}$ of ancestor lists
		\ENSURE The set of edges $E$
	    \STATE {${\cal V} \gets \emptyset$}
        \STATE Let ${\cal V}_0$ be defined as the family of maximal elements in $ANC_L$ 

        \IF{$|\bm{v}|>1$ for all $\bm{v} \in {\cal V}_0$}
            \STATE{${\cal V} \leftarrow {\cal V}_0$}
        \ELSE
        \FOR {each $\bm{v} \in {\cal V}_0$ with $|\bm{v}|=1$}
                \STATE{Add 
                        $\bigcup_{\bm{v}^\prime \in {\cal V}_0, \bm{v}^\prime \ne \bm{v}} 
                        \{\bm{v} \cup \bm{v}^\prime\}$
                        to
                    ${\cal V}$}
        \ENDFOR
        \ENDIF
        
	\FOR {each $\bm{v}$ in ${\cal V}$}
		\STATE {Estimate the causal sub-DAG $G_{\bm{v}}=(\bm{v},E_{\bm{v}})$ for $\bm{v}$ by DirectLiNGAM}
        \ENDFOR     
        \STATE $E \gets \bigcup_{\bm{v} \in {\cal V}} E_{\bm{v}}$ 
        \STATE $G=(X,E)$
        \IF {$G$ contains cycles}
            \STATE {Eliminate cycles in $G$ }
        \ENDIF
	\RETURN $G$
	\end{algorithmic}  
\end{breakablealgorithm}

\end{document}